\newtheorem{theorem}{Theorem}
\newtheorem{problem}{Problem}
\newtheorem{definition}{Definition}
\newtheorem{lemma}{Lemma}
\newcommand{\task}{\Pi}
\newcommand{\traj}{\xi}
\newcommand{\state}{x}
\newcommand{\statespace}{\mathcal{X}}
\newcommand{\safeset}{\mathcal{S}}
\newcommand{\unsafeset}{\mathcal{A}}
\newcommand{\numsafe}{N_s}
\newcommand{\numunsafe}{N_{\neg s}}
\newcommand{\control}{u}
\newcommand{\controlset}{\mathcal{U}}
\newcommand{\trajxu}{\traj_{xu}}
\newcommand{\trajx}{\traj_\state}
\newcommand{\traju}{\traj_\control}
\newcommand{\constraintspace}{\mathcal{C}}
\newcommand{\guarunsafe}{\mathcal{G}_{\neg s}}
\newcommand{\guarsafe}{\mathcal{G}_s}
\newcommand{\feas}{\mathcal{F}}
\newcommand{\cstate}{p}
\newcommand{\numbox}{N^*}
\newcommand{\demj}{\traj_{j}^\textrm{loc}}
\newcommand{\stat}{\textrm{stat}}
\newcommand{\comp}{\textrm{comp}}
\newcommand{\thetac}{\gamma}
\newcommand{\Thetac}{\Gamma}
\newcommand{\eq}{\textrm{eq}}
\newcommand{\ineq}{\textrm{ineq}}
\newtheorem{rem}{Remark}
\title{\LARGE \bf
Learning Constraints from Locally-Optimal Demonstrations under Cost Function Uncertainty
}
\author{Glen Chou, Necmiye Ozay, and Dmitry Berenson$^{1}$
\thanks{$^{1}$Electrical Engineering and Computer Science,
        University of Michigan, Ann Arbor, MI,
        {\tt\small \{gchou, necmiye, dmitryb\}@umich.edu}}%
}
\begin{document}

\maketitle
\thispagestyle{empty}
\pagestyle{empty}

\begin{abstract}
We present an algorithm for learning parametric constraints from locally-optimal demonstrations, where the cost function being optimized is uncertain to the learner. Our method uses the Karush-Kuhn-Tucker (KKT) optimality conditions of the demonstrations within a mixed integer linear program (MILP) to learn constraints which are consistent with the local optimality of the demonstrations, by either using a known constraint parameterization or by incrementally growing a parameterization that is consistent with the demonstrations. We provide theoretical guarantees on the conservativeness of the recovered safe/unsafe sets and analyze the limits of constraint learnability when using locally-optimal demonstrations. We evaluate our method on high-dimensional constraints and systems by learning constraints for 7-DOF arm and quadrotor examples, show that it outperforms competing constraint-learning approaches, and can be effectively used to plan new constraint-satisfying trajectories in the environment.
\end{abstract}

\vspace{-3pt}
\section{Introduction}
\vspace{-2pt}

Learning from demonstration has largely focused on learning cost and reward functions, through the frameworks of inverse optimal control and inverse reinforcement learning (IOC/IRL) \cite{irl_1, irl_2, lfd3, ng_irl}, which replicate the behavior of an expert demonstrator when optimized. However, real-world planning problems, such as navigating a quadrotor in an urban environment, also require that the system obey hard constraints, that is, system trajectories must remain in a set of safe (constraint-satisfying) states. As constraints enforce safety more strictly than cost function penalties, which may ``soften" a constraint and allow violations, they are better suited for planning in safety-critical situations. Furthermore, while different tasks may require different cost functions, oftentimes safety constraints are shared across tasks, and identifying them can help the robot generalize.

Initial work in \cite{wafr} and \cite{corl} has taken steps towards identifying constraints from approximately globally-optimal expert demonstrations, assuming that the demonstrator's cost function is known exactly. However, as humans are not always experts at performing a task, requiring them to provide demonstrations which are nearly globally-optimal can be unreasonable. Furthermore, it is rare for the cost function being optimized to be known exactly by the learner.
To address these shortcomings, we consider the problem of learning parametric constraints shared across tasks from approximately locally-optimal demonstrations under parametric cost function uncertainty. Our method is based on the insight that locally-optimal, constraint-satisfying demonstrations satisfy the Karush-Kuhn-Tucker (KKT) optimality conditions, which are first-order necessary conditions for local optimality of a solution to a constrained discrete-time optimal control problem. We solve a mixed integer linear program (MILP) to recover constraint and cost function parameters which make the demonstrations locally-optimal. We make the following specific contributions in this paper:
\vspace{-1pt}
\begin{itemize}
	\item We develop a novel algorithm for learning parametric, potentially non-convex constraints from approximately locally-optimal demonstrations, where the parameterization can either be provided or grown incrementally to be consistent with the data. The method can extract volumes of safe/unsafe states (states which satisfy/do not satisfy the constraints) for future guaranteed safe planning and enable planners to query states for safety.\vspace{-1pt}
	\item Our method can learn constraints despite uncertainty in the cost function and can also recover a cost function jointly with the constraint.\vspace{-1pt}
	\item Under mild assumptions, we prove that our method recovers guaranteed conservativeness estimates (that is, inner approximations) of the true safe/unsafe sets, and analyze the learnability of a constraint from locally-optimal compared to globally-optimal demonstrations.\vspace{-1pt}
	\item We evaluate our method on difficult constraint learning problems in high-dimensional constraint spaces (23 dimensions) on systems with complex nonlinear dynamics and demonstrate that our method outperforms previous methods for parametric constraint inference \cite{wafr, corl}.\vspace{-1pt}
\end{itemize}

\vspace{-2pt}
\section{Related Work}
\vspace{-1pt}

Previous work in IOC \cite{boyd, toussaint, pontryagin} has used the KKT conditions to recover a cost function from demonstrations, assuming that the constraints are known. In contrast, our method explicitly learns the constraints. The risk-sensitive IRL approach in \cite{sumeet} also uses the KKT conditions, and is complementary to our work, which learns hard constraints. Perhaps the closest to our work is \cite{melanie}, which aims to recover a cost function and constraint simultaneously using the KKT conditions. However, to avoid non-convexity in the cost/constraint recovery problem, \cite{melanie} restricts their method to recovering convex constraints and do not directly search for constraint parameters, instead enumerating an a-priori fixed, finite set of candidate constraint parameters using a method which holds only for the convex case. In contrast, by formulating our problem as a MILP, our method avoids enumeration, directly searching the full continuous space of possible constraint parameters. It also enables us to learn non-convex, nonlinear constraints while retaining formal guarantees on the conservativeness of the recovered constraint.
 
There also exists prior work in learning geometric state space constraints \cite{vijayakumar}, \cite{shah}, which our method generalizes by learning non-convex constraints not necessarily defined in the state space. Learning local trajectory-based constraints has also been explored in \cite{dmitry, anca, lfdc1,lfdc2,lfdc3,lfdc4} by reasoning over the constraints within a single trajectory or task. These methods complement our approach, which learns a global constraint shared across tasks. In the constraint-learning literature, our work is closest to \cite{wafr} and \cite{corl}, which learn a global shared constraint by sampling unsafe trajectories using globally-optimal demonstrations, a known cost function, and a system simulator. In addition to the drawbacks of global optimality and assuming a known cost as mentioned previously, sampling unsafe trajectories can be difficult for systems with complicated dynamics. By using the KKT conditions, which \textit{implicitly} define the unsafe set instead of explicitly through unsafe trajectories, our method sidesteps both the need for an exact cost function to classify the safety of sampled trajectories as well as any sampling difficulties.

\section{Preliminaries and Problem Setup}

We consider discrete-time nonlinear systems $\state_{t+1} = f(\state_t, \control_t, t)$, $\state\in\statespace$ and $\control\in\controlset$, performing tasks $\task$, which are represented as constrained optimization problems over state/control trajectories $\trajxu\doteq(\trajx,\traju)$:
\begin{problem}[Forward problem / ``task" $\task$]\label{prob:fwd_prob}

\vspace{2pt}
\begin{equation}\label{eq:fwdprob}
	\begin{array}{>{\displaystyle}c >{\displaystyle}l >{\displaystyle}l}
				&\\[-18pt]
		\underset{\trajxu}{\text{minimize}} & \quad c_\task(\trajxu, \thetac) &\\
		\text{subject to} & \quad \phi(\trajxu) \in \safeset(\theta) \subseteq \constraintspace\\
		& \quad \bar\phi(\trajxu) \in \bar\safeset \subseteq \bar\constraintspace\\
		& \quad \phi_\task(\trajxu) \in \safeset_\task \subseteq \constraintspace_\task\\[3pt]
	\end{array}\hspace{-15pt}
\end{equation}
\end{problem}

\noindent where $c_\task(\cdot, \thetac)$ is a potentially non-convex cost function for task $\task$, parameterized by $\thetac \in \Thetac$. In Sec. \ref{sec:method_kkt_base} to \ref{sec:method_volume}, we assume that $\thetac$ is known (through possibly inaccurate prior knowledge) for clarity; we later relax this assumption and discuss how to learn $\gamma$ from the demonstrations. Further, $\phi(\cdot)$ is a known mapping from state-control trajectories to a constraint space $\constraintspace$, elements of which are referred to as \textit{constraint states} $\cstate \in \constraintspace$. Mappings $\bar\phi(\cdot)$ and $\phi_\task(\cdot)$ are known and map to constraint spaces $\bar\constraintspace$ and  $\constraintspace_\task$, containing a known shared safe set $\bar \safeset$ and a known task-dependent safe set $\safeset_\task$, respectively. In this paper, we encode the system dynamics in $\bar\safeset$ and start/goal constraints in $\safeset_\task$. Grouping the constraints of Problem \ref{prob:fwd_prob} as equality/inequality (eq/ineq) constraints and known/unknown ($k/\neg k$) constraints, we can write:
\begin{equation}\label{eq:prob1ineq_eq}
\hspace{-7pt}\begin{array}{>{\displaystyle}l >{\displaystyle}l >{\displaystyle}l}
		 h_{i,k}(\trajxu) = 0, i=1,...,N_k^\eq &\hspace{-5pt} \Leftrightarrow \mathbf{h}_k(\trajxu) = \mathbf{0}\\
		 g_{i,k}(\trajxu) \le 0, i=1,...,N_k^\ineq &\hspace{-5pt} \Leftrightarrow \mathbf{g}_{k}(\trajxu) \le \mathbf{0}\\
		 g_{i,\neg k}(\trajxu, \theta) \le 0, i=1,...,N_{\neg k}^\ineq &\hspace{-5pt} \Leftrightarrow \mathbf{g}_{\neg k}(\trajxu, \theta) \le \mathbf{0}\\
	\end{array}\hspace{-7pt}
\end{equation}

\noindent where $\mathbf{h}_k(\trajxu) \in \mathbb{R}^{N_k^\eq}$, $\mathbf{g}_{k}(\trajxu) \in \mathbb{R}^{N_k^\ineq}$, and $\mathbf{g}_{\neg k}(\trajxu, \theta) \in \mathbb{R}^{N_{\neg k}^\ineq}$. Note that unknown equality constraints $\mathbf{h}_{\neg k}(\trajxu, \theta) = 0$ can be written equivalently as $\mathbf{h}_{\neg k}(\trajxu, \theta) \le 0, -\mathbf{h}_{\neg k}(\trajxu, \theta) \le 0$. As shorthand, let $g(\cstate,\theta) \doteq \max_{i\in\{ 1, \ldots, N_{\neg k}^{\ineq}\}}\big(g_{i,\neg k}(\cstate, \theta)\big)$. We now define
\vspace{-2pt}
\begin{align}
	\safeset(\theta) &\doteq \{\cstate \in \constraintspace\mid g(\cstate, \theta) \le 0\}\\
    \unsafeset(\theta) &\doteq \safeset(\theta)^c = \{\cstate \in \constraintspace\mid g(\cstate, \theta) > 0)\}
\end{align}

\vspace{-0pt}
\noindent as an unknown safe/unsafe set defined by unknown parameter $\theta \in \Theta$, for possibly unknown parameterizations $g_{i, \neg k}(\cdot, \cdot)$. Last, we restrict $\Thetac$ and $\Theta$ to be unions of polytopes.

Intuitively, a trajectory $\trajxu$ is locally-optimal if all trajectories within a neighborhood of $\trajxu$ have cost greater than or equal to $c(\trajxu)$. More precisely, for a trajectory to be locally-optimal, it necessarily satisfies the KKT conditions \cite{cvxbook}. We define a demonstration $\traj^\textrm{loc}$ as a state-control trajectory which we assume approximately solves Problem \ref{prob:fwd_prob} to local optimality, i.e. it satisfies all constraints and is in the neighborhood of a local optimum. 

Our goal is to recover the safe set $\safeset(\theta)$ and unsafe set $\unsafeset(\theta)$, given $\numsafe$ demonstrations $\{\traj_{j}^\textrm{loc} \}_{j=1}^{\numsafe}$, known shared safe set $\bar\safeset$, and task-dependent constraints $\safeset_\task$. As a byproduct, our method can also recover unknown cost parameters $\thetac$. 
\vspace{-3pt} 
\section{Method}
\vspace{-3pt}
We detail our constraint-learning algorithm. First, we formulate the general KKT-based constraint recovery problem (Sec. \ref{sec:method_kkt_base}) and then develop specific optimization problems for the cases where the constraint is defined as a union of offset-parameterized (Sec. \ref{sec:method_unions}) or affinely-parameterized constraints (Sec. \ref{sec:method_linear}). We show how to extract guaranteed safe/unsafe states (Sec. \ref{sec:method_volume}), handle unknown constraint parameterizations (Sec. \ref{sec:method_unknown}), and handle cost function uncertainty (Sec. \ref{sec:method_cost}). In closing, we show how our method can be used within a planner to guarantee safety (Sec. \ref{sec:method_planning}).

\vspace{-3pt}
\subsection{Constraint recovery via the KKT conditions}\label{sec:method_kkt_base}
\vspace{-1pt}
Recall that the KKT conditions are necessary conditions for local optimality of a solution of a constrained optimization problem \cite{cvxbook}. 
For constraints \eqref{eq:prob1ineq_eq} and Lagrange multipliers $\lambda$ and $\nu$, the KKT conditions for the $j$th locally-optimal demonstration $\traj_{j}^\textrm{loc}$, denoted $\textrm{KKT}(\traj_{j}^\textrm{loc})$, are:

\vspace{-12pt}\begin{subequations}\label{eq:kkt}
	\small\begin{align}
	\hspace{-25pt}\textrm{Primal feasibility:}\ \  &\mathbf{h}_{k}(\traj_{j}^\textrm{loc}) = \mathbf{0},\label{eq:kkt_primal1}\\[-2pt]
	\hspace{-25pt} \quad &\mathbf{g}_{k}(\traj_{j}^\textrm{loc}) \le \mathbf{0}, \label{eq:kkt_primal2}\\[-3pt]
	\hspace{-25pt}  &\mathbf{g}_{\neg k}(\traj_{j}^\textrm{loc}, \textcolor{red}{\theta}) \le \mathbf{0}, \label{eq:kkt_primal3}\\[1pt]
	\hline\hspace{-25pt}\textrm{Lagrange mult.}\quad &\textcolor{blue}{\lambda_{i,k}^j} \ge 0, \ \ \hspace{-1.5pt} i = 1,..., N_k^\ineq\ \ \Leftrightarrow \ \textcolor{blue}{\boldsymbol{\lambda}_k^j} \ge \mathbf{0}\label{eq:kkt_lag1}\\[-3pt]
	\textrm{nonnegativity:}\quad &\textcolor{blue}{\lambda_{i,\neg k}^j} \ge 0, \ i = 1,..., N_{\neg k}^\ineq\ \Leftrightarrow\ \textcolor{blue}{\boldsymbol{\lambda}_{\neg k}^j} \ge \mathbf{0}\label{eq:kkt_lag2}\\
	\hline\hspace{-3pt}\textrm{Complementary}\quad &\textcolor{blue}{\boldsymbol{\lambda}_{k}^j}\odot\mathbf{g}_{k}(\traj_{j}^\textrm{loc}) = \mathbf{0}\label{eq:kkt_comp1}\\[-3pt]
	\textrm{slackness:}\quad &\textcolor{blue}{\boldsymbol{\lambda}_{\neg k}^j}\odot\mathbf{g}_{\neg k}(\traj_{j}^\textrm{loc}, \textcolor{red}{\theta}) = \mathbf{0}\label{eq:kkt_comp2}\\
	\hline\notag\\[-12pt]
	\hspace{-25pt}\textrm{Stationarity:}\quad &\nabla_{\trajxu} c_\task(\traj_{j}^\textrm{loc}) + \textcolor{blue}{\boldsymbol{\lambda}_{k}^{j}}^\top \nabla_{\trajxu} \mathbf{g}_{k}(\traj_{j}^\textrm{loc})\notag
	\\[-2pt]&\quad+\textcolor{blue}{\boldsymbol{\lambda}_{\neg k}^j}^{\hspace{-4pt}\top} \nabla_{\trajxu} \mathbf{g}_{\neg k}(\traj_{j}^\textrm{loc}, \textcolor{red}{\theta})\label{eq:kkt_stat}
	\\[-2pt]&\quad+\textcolor{blue}{\boldsymbol{\nu}_{k}^j}^\top \nabla_{\trajxu} \mathbf{h}_{k}(\traj_{j}^\textrm{loc}) = \mathbf{0}\notag
\end{align}
\end{subequations}\vspace{-16pt}

\noindent where $\nabla_{\trajxu}(\cdot)$ takes the gradient with respect to a flattened trajectory $\trajxu$ and $\odot$ denotes elementwise multiplication. For compactness, we vectorize the multipliers $\boldsymbol{\lambda}_{k}^j \in \mathbb{R}^{N_k^\ineq}$, $\boldsymbol{\lambda}_{\neg k}^j \in \mathbb{R}^{N_{\neg k}^\ineq}$, and $\boldsymbol{\nu}_{k}^j \in \mathbb{R}^{N_k^\ineq}$. We drop the $\thetac$ dependency, as the cost is assumed known for now, as well as \eqref{eq:kkt_primal1}-\eqref{eq:kkt_primal2}, as they involve no decision variables. Then, finding a constraint  consistent with the local optimality conditions of the $\numsafe$ demonstrations amounts to finding a constraint parameter $\theta$ which satisfies the KKT conditions for each demonstration. That is, we can solve the following feasibility problem:
\vspace{3pt}
\begin{problem}[KKT inverse, locally-optimal]\label{prob:kkt_opt}
\begin{equation}\vspace{2pt}\label{eq:fwdprob}
	\hspace{-27pt}\begin{array}{>{\displaystyle}c >{\displaystyle}l >{\displaystyle}l}
				&\\[-12pt]
		\text{find} & \theta, \boldsymbol{\lambda}_{k}^j, \boldsymbol{\lambda}_{\neg k}^j,\boldsymbol{\nu}_{k}^j,\ \ j = 1,...,\numsafe\\
		\text{s.t.} & \{\textrm{KKT}(\traj_{j}^\textrm{loc})\}_{j=1}^{\numsafe}\\[1pt]
	\end{array}\hspace{-20pt}
\end{equation}
\end{problem}

Further, to address suboptimality (i.e. approximate local-optimality) in the demonstrations, we can relax the stationarity \eqref{eq:kkt_stat} and complementary slackness constraints \eqref{eq:kkt_comp1}-\eqref{eq:kkt_comp2} and place corresponding penalties into the objective function:

\begin{problem}[KKT inverse, suboptimal]\label{prob:kkt_subopt}

\begin{equation}\label{eq:fwdprob}
	\hspace{-6pt}\begin{array}{>{\displaystyle}c >{\displaystyle}l >{\displaystyle}l}
				&\\[-19pt]
		\underset{\theta, \boldsymbol{\lambda}_{k}^j, \boldsymbol{\lambda}_{\neg k}^j,\boldsymbol{\nu}_{k}^j}{\text{minimize}} & \sum_{j=1}^{\numsafe}\big(\Vert \stat(\demj)\Vert_1 + \Vert \comp(\demj)\Vert_1\big)\\
		\text{subject to} & \eqref{eq:kkt_primal3}-\eqref{eq:kkt_lag2}, \ \forall\demj, \ j = 1,\ldots,\numsafe\\[2pt]
	\end{array}\hspace{-15pt}
\end{equation}
\end{problem}

\noindent where $\stat(\demj)$ denotes the LHS of Eq. \eqref{eq:kkt_stat} and $\comp(\demj)$ denotes the concatenated LHSs of Eqs. \eqref{eq:kkt_comp1} and \eqref{eq:kkt_comp2}.

Denote the projection of the feasible set of Problem \ref{prob:kkt_opt} onto $\Theta$ as $\feas$. We define the set of learned guaranteed safe/unsafe constraint states as $\guarsafe$/$\guarunsafe$, respectively. For Problem \ref{prob:kkt_opt}, a constraint state $\cstate$ is learned guaranteed safe/unsafe if $\cstate$ is marked safe/unsafe for all $\theta \in \feas$. Formally, we have:

\vspace{-4pt}
\hspace{-23pt}\begin{minipage}{.52\linewidth}
	\small\begin{equation}\label{eq:guarsafe}
		\guarsafe\hspace{-3pt} \doteq\hspace{-3pt} \bigcap_{\theta \in \feas} \{ \cstate\ |\ g(\cstate, \theta) \le 0 \}\hspace{-7pt}
	\end{equation}
\end{minipage}\hspace{5pt}\begin{minipage}{.52\linewidth}
	\small\begin{equation}\label{eq:guarunsafe}
		\guarunsafe\hspace{-3pt} \doteq\hspace{-3pt} \bigcap_{\theta \in \feas} \{ \cstate\ |\ g(\cstate, \theta) > 0 \}\hspace{-7pt}
	\end{equation}
\end{minipage}

\vspace{5pt}We now formulate variants of Problem \ref{prob:kkt_opt} which are efficiently solvable for specific constraint parameterizations. For legibility, we describe the method assuming $\constraintspace = \statespace$ and $\phi(\cdot)$ is the identity. Due to the bilinearity between decision variables in Problems \ref{prob:kkt_opt} and \ref{prob:kkt_subopt} for some parameterizations, we describe exact (Sec. \ref{sec:method_unions}) and relaxed (Sec. \ref{sec:method_linear}) formulations for recovering the unknown parameters.
\vspace{-5pt}
\subsection{Unions of offset-parameterized constraints}\label{sec:method_unions}
Consider when Problem \ref{prob:fwd_prob} involves avoiding an unsafe set $\unsafeset(\theta)$ described by the union and intersection of offset-parameterized half-spaces (i.e. $\theta$ does not multiply $\cstate$):
\vspace{-6pt}\begin{equation}\label{eq:unions_of_offsets}
\unsafeset(\theta) = \bigcup_{m=1}^{N_c} \bigcap_{n=1}^{N_c^m} \{ \cstate\mid a_{m,n}^\top \cstate < b_{m,n}(\theta)\}
\end{equation}\vspace{-10pt}

This parameterization can represent any arbitrarily-shaped unsafe set if $N_c$ is sufficiently large (i.e. as a union of polytopes) \cite{tao}, though in practice our method may not be efficient for large $N_c$. We will often use the specific case of unions of axis-aligned $N_c^m$-dimensional hyper-rectangles,
\vspace{-4pt}\begin{equation}\label{eq:rect}
	\unsafeset(\theta) = \bigcup_{m=1}^{N_c} \bigcap_{n=1}^{N_c^m} \{ \cstate\mid \cstate < \overline{\theta}_n^m, -\cstate < -\underline{\theta}_n^m\},
\end{equation}\vspace{-8pt}

\noindent where $\overline{\theta}_n^m/\underline{\theta}_n^m$ are the upper/lower extents of dimension $n$ of box $m$. We now modify the KKT conditions to handle the ``or" constraints in \eqref{eq:unions_of_offsets}. Primal feasibility \eqref{eq:kkt_primal3} changes to

\vspace{-12pt}
\begin{equation}\label{eq:kkt_or}
\hspace{-5pt} \small\forall \cstate \in \demj, \forall m=1,...,N_c, \bigvee_{n=1}^{N_c^m}\Big( a_{m,n}^\top \cstate\ge b_{m,n}(\theta)\Big),\hspace{-3pt}
\end{equation}\vspace{-4pt}

\vspace{-3pt}\noindent which can be implemented using the big-M formulation \cite{bertsimas}:
\vspace{-7pt}\begin{equation}\label{eq:primal_feas_Ab}
	\small \forall \cstate, \forall m,\ \mathbf{A}_{m}^\top \cstate \ge \mathbf{b}_{m}(\theta) - M(\mathbf{1}-\mathbf{z}_{m}^{j,\cstate}),\ \ \sum_{n=1}^{N_c^m} \mathbf{z}_{m}^{j,\cstate}(n) \ge 1,
\end{equation}\vspace{-7pt}

\noindent where $M$ is a large positive number, $\mathbf{A}_m\in \mathbb{R}^{N_c^m\times |\cstate|}$ and $\mathbf{b}_m\in \mathbb{R}^{N_c^m}$ are the vertical concatenation of $a_{m,n}$ and $b_{m,n}$ for all $n$, $\mathbf{z}_{m}^{j,\cstate} \in \{0,1\}^{N_c^m}$ are binary variables encoding that at least one half-space constraint must hold, and $\mathbf{z}_m^{j,\cstate}(n)$ is the $n$th entry of $\mathbf{z}_m^{j,\cstate}$. For demonstration $\demj$ to be locally-optimal, we know that for each $\cstate \in \demj$, the complementary slackness condition, $\lambda_{(m,n),\neg k}^{j,\cstate}(a_{m,n}^\top \cstate - b_{m,n}(\theta)) = 0$, must hold for at least one $n$ and for all $m$ in Eq. \eqref{eq:kkt_or}. Furthermore, in the stationarity condition \eqref{eq:kkt_stat}, $\lambda_{(m,n),\neg k}^{j,\cstate} \nabla_{\cstate} g_{(m,n),\neg k}(\traj_{j}^\textrm{loc}, \theta)$ terms should only be included for $(m,n)$ pairs where the complementary slackness condition is enforced. Thus, we can enforce that $\lambda_{(m,n),\neg k}^{j,\cstate}(a_{m,n}^\top \cstate - b_{m,n}(\theta)) = 0$ holds for all $\cstate \in \demj$, for all $m \in \{1,...,N_c\}$, and for some $n \in \{1,...,N_c^m\}$ by writing:

\vspace{-12pt}\begin{equation}\footnotesize
	\hspace{-5pt}\begin{gathered}\label{eq:bilinear_stationarity}
	\forall \cstate, m, \begin{bmatrix}
		\boldsymbol{\lambda}_{m,\neg k}^{j,\cstate}\\ \ \hspace{-3pt}\mathbf{A}_m^\top \cstate - \mathbf{b}_m(\theta)
	\hspace{-3pt}\end{bmatrix}\hspace{-3pt}\le M\hspace{-3pt}\begin{bmatrix} \mathbf{z}_{m,1}^{j,\cstate}\\\mathbf{z}_{m,2}^{j,\cstate}\end{bmatrix}\hspace{-3pt},\quad\mathbf{z}_{m,1}^{j,\cstate}+\mathbf{z}_{m,2}^{j,\cstate} \le 2-\mathbf{q}_m^{j,\cstate},\\[-1pt]
	\ \ \sum_{n=1}^{N_c^m} \mathbf{q}_{m}^{{j,\cstate}}(n) \ge 1, \quad \mathbf{z}_{m,1}^{j,\cstate},\ \mathbf{z}_{m,2}^{j,\cstate},\ \mathbf{q}_{m}^{j,\cstate} \in \{0, 1\}^{N_c^m}\\[0pt]
\end{gathered}\hspace{-5pt}
\end{equation}\vspace{-8pt}

\noindent together with \eqref{eq:kkt_lag1} and \eqref{eq:kkt_lag2}, where we use a big-M formulation with binary variables $z$ (encoding the complementary slackness condition) and $q$ (encoding if the complementary slackness condition is being enforced). We have denoted $\boldsymbol{\eta}_{m,\neg k}^{j,\cstate} \doteq [\eta_{(m,1),\neg k}^{j,\cstate}, \ldots, \eta_{(m,N_c^m),\neg k}^{j,\cstate}]^\top$ for $\eta \in \{\lambda, z, q\}$. Next, we modify line 2 of constraint \eqref{eq:kkt_stat} to enforce:
\vspace{-4pt}\begin{equation}\label{eq:bilinear}
	\footnotesize \hspace{-5pt}\sum_{m=1}^{N_c} \big[{(\boldsymbol{\lambda}_{m,\neg k}^{j,\cstate}}\odot \mathbf{q}_{m}^{j,\cstate})^\top \nabla_\cstate (\mathbf{b}_n(\theta) - \mathbf{A}_m^\top \cstate)\big] \doteq \sum_{m=1}^{N_c} {\mathbf{q}_{m}^{j,\cstate}}^\top \mathbf{L}_m^{j,\cstate}
\end{equation}
\vspace{-8pt}

\noindent for all $\cstate\in\demj$, where the $(i,n)$-th entry of $\mathbf{L}_m^{j,\cstate}\in \mathbb{R}^{N_c^m \times |\cstate|}$, $\mathbf{L}_m^{j,\cstate}(i,n)$, refers to $\lambda_{(m,n),\neg k}^{j,\cstate} \nabla_{\cstate(i)} (b_{m,n}(\theta) - a_{m,n}^\top \cstate)$. Note that $\lambda$ (continuous variables) and $q$ (binary variables) are bilinear ($\nabla_\cstate (b_{m,n}(\theta) - a_{m,n}^\top \cstate)$ has no decision variables as $\theta$ does not multiply $\cstate$). By assuming bounds $\underline M \le \mathbf{L}_m^{j,\cstate}(i,n) \le \overline M$, this can be reformulated \textbf{exactly} in a linear fashion (i.e. \textit{linearized}) \cite{mccormick} by replacing each bilinear product $\mathbf{q}_m^{j,\cstate}(n)\mathbf{L}_m^{j,\cstate}(i,n)$ in \eqref{eq:bilinear} with slack variables $\mathbf{R}_{m}^{j,\cstate}(i,n)$ and adding constraints (where $\mathbf{\tilde q}_{m}^{j,\cstate}(n)\doteq (1-\mathbf{q}_{m}^{j,\cstate}(n)$ for short):

\vspace{-9pt}\begin{equation}
\footnotesize\hspace{-8pt}\begin{gathered}\label{eq:bilinear_slack}
	\textrm{min}(0, \underline M) \le \mathbf{R}_{m}^{j,\cstate}(i,n) \le \overline M\\
	\underline M \mathbf{q}_{m}^{j,\cstate}(n) \le \mathbf{R}_{m}^{j,\cstate}(i,n) \le \overline M \mathbf{q}_{m}^{j,\cstate}(n) \\
	\mathbf{L}_{m}^{j,\cstate}(i,n) - \mathbf{\tilde q}_{m}^{j,\cstate}(n)\overline M \le \mathbf{R}_{m}^{{j,\cstate}}(i,n) \le \mathbf{L}_{m}^{j,\cstate}(i,n) - \mathbf{\tilde q}_{m}^{j,\cstate}(n)\underline M \\
	\mathbf{R}_{m}^{j,\cstate}(i,n) \le \mathbf{L}_m^{j,\cstate}(i,n) + \mathbf{\tilde q}_{m}^{j,\cstate}(n)\overline M
\end{gathered}\hspace{-5pt}
\end{equation}\vspace{-5pt}

\noindent Finally, let $\mathbf{R}_{m}^j$/$\mathbf{L}_{m}^j$ be the horizontal concatenation of $\mathbf{R}_{m}^{j,\cstate}$/$\mathbf{L}_{m}^{j,\cstate}$, for all $\cstate \in \demj$. We can now pose the full problem:

\vspace{3pt}
\begin{problem}[KKT inverse, unions]\label{prob:kkt_union}
	\begin{equation}\small
	\hspace{-15pt}\begin{array}{>{\displaystyle}c >{\displaystyle}l >{\displaystyle}l}
				&\\[-13pt]
		\text{find} & \theta, \boldsymbol{\lambda}_{k}^{j,\cstate}, \boldsymbol{\lambda}_{\neg k}^{j,\cstate},\boldsymbol{\nu}_{k}^j,\mathbf{R}_{m}^j, \mathbf{L}_{m}^j, \mathbf{q}_m^{j,\cstate}, \mathbf{z}_{m,1}^{j,\cstate}, \mathbf{z}_{m,2}^{j,\cstate}, \\\
		&\quad \forall\cstate \in \demj, m = 1,...,N_c, j = 1,...,\numsafe\\
		\text{s.t.} & \texttt{-------- Primal feasibility --------} \\[-2pt]
		& \textrm{Equation}\ \eqref{eq:primal_feas_Ab},\ j=1,...,N_s\\
		& \texttt{--- Lagrange mult. nonnegativity ---} \\[-1pt]
		& \boldsymbol{\lambda}_{k}^{j,\cstate} \ge \mathbf{0},\ \forall \cstate \in \demj,\ j = 1,\ldots,\numsafe \\
		& \boldsymbol{\lambda}_{m,\neg k}^{j,\cstate} \ge \mathbf{0},\ \forall \cstate \in \demj, \ m = 1,...,N_c, \ j = 1,...,\numsafe \\[-2pt]
		& \texttt{------ Complementary slackness -----} \\
		& \textrm{Equation}\ \eqref{eq:bilinear_stationarity},\ j = 1, \ldots, N_s \\[-1pt]
		& \texttt{----------- Stationarity -----------} \\[-2pt]
		&\nabla_{\trajxu} c_\task(\traj_{j}^\textrm{loc}) +{\boldsymbol{\lambda}_{\neg k}^j}^{\hspace{-6pt}\top} \nabla_{\trajxu} \mathbf{g}_{k}(\traj_{j}^\textrm{loc})+\sum_{m=1}^{N_c} \mathbf{1}_{N_c^m}^\top\mathbf{R}_{m}^{j}\\[0pt]
		&\quad+{\boldsymbol{\nu}_{\neg k}^j}^{\hspace{-6pt}\top} \nabla_{\trajxu} \mathbf{h}_{k}(\traj_{j}^\textrm{loc}) = 0,\ j=1,...,N_s\\
		&\textrm{Equation } \eqref{eq:bilinear_slack},\ \forall \cstate\in\demj, m = 1,...,N_c,\\
		&\quad n = 1,...,N_c^m, j=1,...,N_s
	\end{array}\hspace{-19pt}
\end{equation}
\end{problem}

\vspace{1pt}
\subsection{Extraction of safe and unsafe states}\label{sec:method_volume}

Before moving onto affine parameterizations, we first detail how to check guaranteed safeness/unsafeness (as defined in \eqref{eq:guarsafe}-\eqref{eq:guarunsafe}). One can check if a constraint state $\cstate \in \guarsafe$ or $\cstate \in \guarunsafe$ by adding a constraint $g(\cstate, \theta) > 0$ or $g(\cstate, \theta) \le 0$ to Problem \ref{prob:kkt_opt} and checking feasibility of the resulting program:

\vspace{2pt}
\begin{problem}[Query if $\cstate$ is guaranteed safe \textbf{OR} unsafe]\label{prob:query}
	\begin{equation}\vspace{1pt}\label{eq:fwdprob}
	\hspace{-27pt}\begin{array}{>{\displaystyle}c >{\displaystyle}l >{\displaystyle}l}
		&\\[-16pt]
		\text{find} & \theta, \boldsymbol{\lambda}_{k}^j, \boldsymbol{\lambda}_{\neg k}^j,\boldsymbol{\nu}_{k}^j,\ \ j = 1,...,\numsafe\\
		\text{s.t.} & \{\textrm{KKT}(\traj_{j}^\textrm{loc})\}_{j=1}^{\numsafe}, \quad g(\cstate, \theta) > 0 \textrm{ \textbf{OR} } g(\cstate, \theta) \le 0
	\end{array}\hspace{-20pt}
\end{equation}
\end{problem}
\vspace{4pt}

If Problem \ref{prob:query} is infeasible, then $\cstate \in \guarsafe$ or $\cstate \in \guarunsafe$. Solving this problem is akin to querying an oracle about the safety of $\cstate$. The oracle can return that $\cstate$ is guaranteed safe (program infeasible after forcing $\cstate$ to be unsafe), guaranteed unsafe (program infeasible after forcing $\cstate$ to be safe), or unsure (program is feasible despite forcing $\cstate$ to be safe or unsafe).

Since the constraint space is continuous, it is not possible to check via enumeration if each $\cstate \in\guarunsafe$ or $\cstate \in \guarsafe$. To address this, we can check the neighborhood of a constraint state $\cstate_\textrm{query}$ for membership in $\guarunsafe$ by solving the following:

\begin{problem}[Volume extraction]\label{prob:kkt_vol}
\begin{equation}
	\begin{array}{>{\displaystyle}c >{\displaystyle}l >{\displaystyle}l}
			&\\[-18pt]
		\underset{\varepsilon, \cstate_\textrm{near}, \theta, \boldsymbol{\lambda}_{k}^j, \boldsymbol{\lambda}_{\neg k}^j,\boldsymbol{\nu}_{k}^j}{\text{minimize}} & \varepsilon &\\[-2pt]
		\text{subject to} & \{\textrm{KKT}(\traj_{j}^\textrm{loc})\}_{j=1}^{\numsafe} \\
		& \Vert \cstate_\textrm{near} - \cstate_\textrm{query} \Vert_\infty \le \varepsilon \\ 
		& g(\cstate_\textrm{near}, \theta) > 0\\[2pt]
	\end{array}\hspace{-15pt}
\end{equation}
\end{problem}

Intuitively, Problem \ref{prob:kkt_vol} finds the largest box centered at $\cstate_\textrm{query}$ contained within $\guarsafe$. An analogous problem can also be posed to recover the largest hypercube centered at $\cstate_\textrm{query}$ contained within $\guarunsafe$. For some common parameterizations (axis-aligned hyper-rectangles, convex sets), subsets of $\guarsafe$ and $\guarunsafe$ can be even more efficiently recovered by performing line searches or taking convex hulls of guaranteed safe/unsafe states, details of which are in Appendix B of \cite{corl}. Volumes of safe/unsafe space can thus be produced by repeatedly solving Problem \ref{prob:kkt_vol} for different $\cstate_\textrm{query}$.

\vspace{-3pt}
\subsection{KKT relaxation for unions of affine constraints}\label{sec:method_linear}
\vspace{-0pt}

Now, consider when Problem \ref{prob:fwd_prob} involves avoiding an unsafe set $\unsafeset(\theta)$ described by a union of affine constraints:

\vspace{-8pt}\begin{equation}\label{eq:unions_of_affine}
	\unsafeset(\theta) = \bigcup_{i=1}^{N_c} \{\cstate \mid g_{i, \neg k}(\cstate, \theta) > 0\}
\end{equation}\vspace{-8pt}

\noindent where $g_{i, \neg k}(\cstate, \theta)$ is an affine function of $\theta$ for fixed $\cstate$. Unlike in Sec. \ref{sec:method_unions}, formulating the recovery problem like Problem \ref{prob:kkt_union} yields trilinearity in the stationarity condition between continuous variables $\theta$ and $\lambda$ and binary variables $q$, since for the affine case, $\nabla_{\trajxu} g_{i, \neg k}(\cdot, \theta)$ remains a function of $\theta$. As the product of two continuous decision variables cannot be linearized exactly, one must solve a MINLP to recover $\theta$ in this case, which can be inefficient. However, a relaxation which enables querying of guaranteed safeness/unsafeness via Problem \ref{prob:query} can be formulated as a MILP. For legibility, we present the $N_c=1$ case, where there is only one affine constraint (and hence the binary variables $q$ seen in Problem \ref{prob:kkt_union} are all set to 1 and can thus be dropped). Each bilinear term $\lambda_{1,\neg k}^{j,\cstate} \nabla_{\cstate} g_{1, \neg k}(\cstate, \theta)$ is replaced with $l_{1}^{j,\cstate} z_{1,1}^{j,\cstate}$, where $l_{1}^{j,\cstate}$ is a variable which represents the bilinear term and $z_{1,1}^{j,\cstate}$ is an indicator variable encoding that if $z_{1,1}^{j,\cstate}$ is 0, then $\lambda_{1, \neg k}^{j,\cstate}$ must be 0. Hence, by linearizing the bilinear term as such, there is no relaxation gap when the Lagrange multipliers are zero; the only loss is when the Lagrange multipliers are non-zero (i.e. when the demonstration touches the constraint boundaries). In this case, coupling between $\lambda$ and $\theta$ is lost by introducing the $l_{1}^{j,\cstate}$ variables. We further linearize $l_{1}^{j,\cstate} z_{1,1}^{j,\cstate}$ (product of continuous, binary variables) with the same procedure in Sec. \ref{sec:method_unions} by again introducing slack variables $r_{1}^{j,\cstate}$ and constraining them accordingly with \eqref{eq:bilinear_slack}, where the $q_{m,n}^{j,\cstate}$ are replaced with $z_{1,1}^{j,\cstate}$. Putting things together, we can write the following relaxed constraint recovery problem for $N_c=1$:

\begin{problem}[KKT relaxation, affine]\label{prob:kkt_relax}
	\begin{equation}\small
	\hspace{-15pt}\begin{array}{>{\displaystyle}c >{\displaystyle}l >{\displaystyle}l}
				&\\[-10pt]
		\text{find} & \theta, \boldsymbol{\lambda}_{k}^{j,\cstate}, \boldsymbol{\lambda}_{\neg k}^{j,\cstate},\boldsymbol{\nu}_{k}^j,\mathbf{r}_{1}^j, \boldsymbol{\ell}_{1}^j, z_{1,1}^{j,\cstate}, z_{1,2}^{j,\cstate},\ \forall\cstate \in \demj,\ j = 1,...,\numsafe &\\
		\text{s.t.} & \texttt{-------- Primal feasibility --------} \\[-2pt]
		& g_{1,\neg k}(\cstate, \theta) \le 0,\ \forall\cstate \in \demj,\ j=1,\ldots,\numsafe \\[-1pt]
		& \texttt{--- Lagrange mult. nonnegativity ---} \\[-1pt]
		& \boldsymbol{\lambda}_{k}^{j,\cstate} \ge \mathbf{0},\ \forall \cstate \in \demj, \ j = 1,\ldots,\numsafe \\[0pt]
		& \lambda_{1,\neg k}^{j,\cstate} \ge 0,\ \forall \cstate\in \demj, j = 1,\ldots,\numsafe \\[-2pt]
		& \texttt{------ Complementary slackness -----} \\[-1pt]
		& \begin{bmatrix}\lambda_{1,\neg k}^{j,\cstate}\\ \ -g_{1,\neg k}(\cstate, \theta)\end{bmatrix} \le M\begin{bmatrix} z_{1,1}^{j,\cstate}\\z_{1,2}^{j,\cstate}\end{bmatrix},\ z_{1,1}^{j,\cstate}+z_{1,2}^{j,\cstate} \le 1, \\[-1pt]
		& \quad\quad \forall \cstate\in\demj, j = 1,\ldots,N_s\\[-2pt]
		& \texttt{----------- Stationarity -----------} \\[-2pt]
		&\nabla_{\trajxu} c_\task(\traj_{j}^\textrm{loc}) + \boldsymbol{\lambda}_{k}^j \nabla_{\trajxu} \mathbf{g}_{k}(\traj_{j}^\textrm{loc})+\mathbf{r}_{1}^j\\[-0pt]
		&\quad\quad+\mathbf{\boldsymbol{\nu}}_{k}^j \nabla_{\trajxu} \mathbf{h}_{k}(\traj_{j}^\textrm{loc}) = 0,\ j = 1,\ldots,N_s\\
		&\textrm{min}(0, \underline M)\mathbf{1} \le \mathbf{r}_{1}^j \le \overline M\mathbf{1}, \quad \underline M \mathbf{z}_{1,1}^j \le \mathbf{r}_{1}^j \le \overline M \mathbf{z}_{1,1}^j, \\
		&\quad \boldsymbol{\ell}_{1}^j - (\mathbf{1}-\mathbf{z}_{1,1}^j)\overline M \le \mathbf{r}_{1}^j \le \boldsymbol{\ell}_{1}^j - (\mathbf{1}-\mathbf{z}_{1,1}^j)\underline M, \\
		&\quad \mathbf{r}_{1}^j \le \boldsymbol{\ell}_{1}^j + (\mathbf{1}-\mathbf{z}_{1,1}^j)\overline M,\quad j = 1,\ldots,N_s\\[3pt]
	\end{array}\hspace{-40pt}
\end{equation}
\end{problem}
\noindent where $\mathbf{r}_1^j$, $\mathbf{z}_{1,1}^j$, $\boldsymbol{\ell}_1^j$ denote horizontal concatenation of $r_1^{j,\cstate}$, $z_{1,1}^{j,\cstate}$, $l_1^{j,\cstate}$ over $\cstate$. The case where the constraint is a union of affine constraints yields quadrilinearity and can be handled similarly, requiring one extra step to linearize the products of binary variables $q_{m}^{j,\cstate}$ and $z_{1,1}^{j,\cstate}$, which can be done exactly.

While Problem \ref{prob:kkt_relax} cannot recover the constraint parameter $\theta$ directly, one can still check if a constraint state is guaranteed safe/unsafe using Problem \ref{prob:query} (see Theorem \ref{thm:kkt_relax} for reasoning).
\vspace{-15pt}
\subsection{Unknown constraint parameterization}\label{sec:method_unknown}
\vspace{-2pt}

In many applications, we may not know a constraint parameterization a priori. However, complex unsafe/safe sets can often be approximated as the union of many simple unsafe/safe sets. Thus, we adapt the method in \cite{corl} for incrementally growing a parameterization based on the complexity of the provided demonstrations. More precisely, suppose the true parameterization $g(\cstate, \theta)$ of the unsafe set $\unsafeset(\theta) = \{\cstate \mid g(\cstate, \theta) > 0\}$ is unknown but can be exactly/approximately written as the union of $\numbox$ simple sets $\unsafeset(\theta) \approxeq \bigcup_{i=1}^{\numbox}\{\cstate\mid g_s(\cstate,\theta_i) > 0\}\doteq \bigcup_{i=1}^{\numbox} \unsafeset(\theta_i)$. Each simple set $\unsafeset(\theta_i)$ has a known parameterization $g_s(\cdot, \cdot)$ but $\numbox$, the minimum number of simple sets needed to reconstruct $\unsafeset$, is unknown. We can estimate a lower bound on $\numbox$, $\underline N$, by incrementally adding simple sets until Problem \ref{prob:kkt_opt} is feasible (i.e. there exists a sufficiently complex constraint which can satisfy the demonstrations' KKT conditions). Issues with conservativeness of the recovered constraint when $\underline N < \numbox$ are discussed in \cite{corl} and also hold here, which we omit for brevity.

\vspace{0pt}
\subsection{Handling cost function uncertainty}\label{sec:method_cost}
\vspace{0pt}

We now extend the KKT conditions presented in \eqref{eq:kkt} and Problems \ref{prob:kkt_opt} and \ref{prob:kkt_subopt} to learn constraints with parametric uncertainty in the cost function (i.e. if $\thetac$ in Problem \ref{prob:fwd_prob} is unknown). To address this, the first term in the stationarity condition \eqref{eq:kkt_stat} must be changed to $\nabla_{\trajxu} c_\task(\traj_{j}^\textrm{loc}, \textcolor{red}{\thetac})$. Then, if $c_\task(\cdot, \gamma)$ is affine in $\gamma$, $\thetac$ can be found using a MILP.

Querying/volume extraction holds just as before; the only difference is that $\thetac$ is now a decision variable in Problem \ref{prob:query}/\ref{prob:kkt_vol}. Note we are extracting constraint states that are guaranteed safe/unsafe for all possible cost parameters; that is, we are extracting safe/unsafe sets that are robust to cost uncertainty.

We summarize what we can solve for when using various parameterizations. For the exact cases, we can solve for $\theta$/$\thetac$, but when relaxing, we can only solve for $\safeset$/$\unsafeset$ via queries. Note the constraint/cost can be nonlinear in $\cstate$ without inducing relaxation, though it precludes usage of Problem \ref{prob:kkt_vol} (as $\cstate$ is a decision variable in the latter, but not the former):

\begin{centering}\vspace{4pt}
\footnotesize\begin{tabular}{ c |c| c|c}
 Constraint param. & Cost param. &\hspace{-8pt}Recover $\theta, \thetac$?\hspace{-8pt} & $\guarsafe/\guarunsafe$\hspace{-5pt} \\\hline\hline
 \hspace{-5pt}$\theta$: form of \eqref{eq:unions_of_offsets}; $\cstate$: affine & \hspace{-5pt}$\thetac$: affine; $\cstate$: nonlin.\hspace{-5pt} & Yes: Prob. \ref{prob:kkt_union} & \hspace{-5pt}Prob. \ref{prob:query}/\ref{prob:kkt_vol}\hspace{-5pt} \\\hline
 \hspace{-5pt}$\theta$: form of \eqref{eq:unions_of_offsets}; $\cstate$: nonlin.\hspace{-5pt} & \hspace{-5pt}$\thetac$: affine; $\cstate$: nonlin.\hspace{-5pt} & Yes: Prob. \ref{prob:kkt_union} & \hspace{-5pt}Prob. \ref{prob:query}\hspace{-5pt} \\\hline
 \hspace{-5pt}$\theta$: form of \eqref{eq:unions_of_affine}; $\cstate$: nonlin.\hspace{-5pt} & \hspace{-5pt}$\thetac$: affine; $\cstate$: nonlin.\hspace{-5pt} & No & \hspace{-5pt}Prob. \ref{prob:query}\hspace{-5pt} \\\hline
\end{tabular}\vspace{4pt}
\end{centering}
\noindent This only describes what we can solve for; the actual accuracy of the recovered $\theta$/$\gamma$ and the size of the recovered $\guarsafe$/$\guarunsafe$ depends on how informative the demonstrations are, i.e. the demonstrations should interact with the constraint.

\vspace{-6pt}
\subsection{Applications to safe planning}\label{sec:method_planning}
\vspace{-2pt}

As learned constraints can be reused for novel tasks with the same safety requirements, we end this section by describing how our method can be used within a planner to guarantee the safety of trajectories planned for such tasks. Recall that Problems \ref{prob:query} and \ref{prob:kkt_vol} can be used to query if a constraint state $\cstate$ or a region around $\cstate$ is guaranteed safe/unsafe. The planner can use this information by either:
\begin{itemize}
	\item Extracting an \textit{explicit} representation of the constraint by repeatedly solving Problem \ref{prob:kkt_vol} for different $\cstate$ to cover $\safeset$ and $\unsafeset$. Denote these extracted sets as $\hat\safeset \subseteq \safeset$ and $\hat\unsafeset \subseteq \unsafeset$ (the conservativeness of our method is proved in Sec. \ref{sec:conservativeness}). Then, $\hat\safeset$ can be passed to a planner and quickly used for constraint/collision checking via set-containment checks. A planned trajectory is \textit{guaranteed safe} if each state on it lies in $\hat\safeset$, since $\hat\safeset$ is contained in true safe set $\safeset$. If $\hat\safeset$ is small and the planner cannot find a feasible trajectory, we can at least guarantee that a trajectory is \textit{not definitely unsafe} it it does not intersect with $\hat\unsafeset$, as $\hat\unsafeset$ is contained in the true unsafe set $\unsafeset$.
	\item Extracting an \textit{implicit} representation of the constraint by solving Problem \ref{prob:query} as needed by the planner. This may be less computationally efficient than the explicit case, but we demonstrate in Sec. \ref{sec:res_quad} that we still achieve reasonable planning times for a 7-DOF arm.
\end{itemize}

\vspace{-8pt}
\section{Theoretical Analysis}
\vspace{-2pt}
In this section, we prove that our method provides a conservative estimate of the guaranteed learned safe/unsafe sets $\guarsafe, \guarunsafe$ (Sec. \ref{sec:conservativeness}) and prove learnability results using locally-optimal demonstrations (Sec. \ref{sec:learnability}).

\subsection{Conservativeness}\label{sec:conservativeness}
\vspace{-2pt}

\begin{definition}[Implied unsafe/safe set]
For some set $\mathcal{B} \subseteq \Theta$, let $I_{\neg s}(\mathcal{B}) \doteq \bigcap_{ \theta \in \mathcal{B} } \{x\ |\ g(x, \theta) > 0 \}$ be the set of states implied unsafe by restricting the parameter set to $\mathcal{B}$, i.e. $I_{\neg s}(\mathcal{B})$ is the set of states that all $\theta \in \mathcal{B}$ mark as unsafe. Similarly, let $I_{s}(\mathcal{B}) \doteq \bigcap_{ \theta \in \mathcal{B} } \{x\ |\ g(x, \theta) \le 0 \}$ be the set of states implied safe by restricting the parameter set to $\mathcal{B}$.
\end{definition}

We further introduce the following lemma:

\begin{lemma}[Lemma C.1 in \cite{corl}]\label{lem:contain}
	Suppose $\mathcal{B} \subseteq \mathcal{\hat B}$, for some other set $\mathcal{\hat B}$. Then, $I_{\neg s}(\mathcal{\hat B}) \subseteq I_{\neg s}(\mathcal{B})$ and $I_{s}(\mathcal{\hat B}) \subseteq I_{s}(\mathcal{B})$.
\end{lemma}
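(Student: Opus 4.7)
The plan is to prove both inclusions directly from the definitions of $I_{\neg s}$ and $I_s$ as intersections indexed by the parameter set. The underlying observation is simply that intersecting a family of sets over a larger index set can only shrink the intersection, so enlarging $\mathcal{B}$ to $\hat{\mathcal{B}}$ can only shrink the implied sets. No assumption on $g$, $\Theta$, or the topology of $\mathcal{B}, \hat{\mathcal{B}}$ is needed beyond the containment hypothesis.

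For the unsafe inclusion, I would pick an arbitrary $x \in I_{\neg s}(\hat{\mathcal{B}})$. By definition of $I_{\neg s}$, this means $g(x,\theta) > 0$ for every $\theta \in \hat{\mathcal{B}}$. Since $\mathcal{B} \subseteq \hat{\mathcal{B}}$, any $\theta \in \mathcal{B}$ also lies in $\hat{\mathcal{B}}$, and hence $g(x,\theta) > 0$ for every $\theta \in \mathcal{B}$. Applying the definition of $I_{\neg s}(\mathcal{B})$ in the reverse direction gives $x \in I_{\neg s}(\mathcal{B})$, establishing $I_{\neg s}(\hat{\mathcal{B}}) \subseteq I_{\neg s}(\mathcal{B})$.

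The safe inclusion follows by an identical argument, with $g(x,\theta) > 0$ replaced by $g(x,\theta) \le 0$ throughout. Taking $x \in I_s(\hat{\mathcal{B}})$ gives $g(x,\theta) \le 0$ for every $\theta \in \hat{\mathcal{B}}$, so in particular for every $\theta \in \mathcal{B}$, yielding $x \in I_s(\mathcal{B})$.

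I do not expect any real obstacle here: the statement is a pure monotonicity-of-intersection fact, and the entire proof is essentially unwinding the definitions. The only thing to be careful about is to phrase the quantifier step cleanly (``for all $\theta$ in the larger set implies for all $\theta$ in the smaller set''), since this is where the direction of the inclusion flips compared to the direction of $\mathcal{B} \subseteq \hat{\mathcal{B}}$, and it is the sort of place where an inattentive reader might worry about a sign error.
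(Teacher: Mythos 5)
Your proof is correct and is the standard monotonicity-of-intersection argument that the paper implicitly relies on (the paper itself defers the proof to Lemma C.1 of the cited reference, whose argument is exactly this definitional unwinding). Nothing is missing: the containment hypothesis plus the quantifier step ``for all $\theta \in \hat{\mathcal{B}}$ implies for all $\theta \in \mathcal{B}$'' is the entire content of the lemma.
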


\begin{theorem}[Conservativeness of Problem \ref{prob:kkt_opt}]\label{thm:conserv_kkt_opt}
	Suppose the constraint parameterization $g(\state, \theta)$ is known exactly. Then, extracting $\guarsafe$ and $\guarunsafe$ (as defined in \eqref{eq:guarsafe} and \eqref{eq:guarunsafe}, respectively) from the feasible set of Problem \ref{prob:kkt_opt} projected onto $\Theta$ (denoted as $\feas$) returns $\guarunsafe \subseteq \unsafeset$ and $\guarsafe \subseteq \safeset$.
\end{theorem}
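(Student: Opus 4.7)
The plan is to reduce the theorem to a single application of Lemma~\ref{lem:contain} once we show that the true constraint parameter lies in the recovered feasible set $\feas$. Write $\theta^*$ for the true parameter, so that $\safeset = \safeset(\theta^*) = \{\cstate \mid g(\cstate, \theta^*) \le 0\}$ and $\unsafeset = \unsafeset(\theta^*) = \{\cstate \mid g(\cstate, \theta^*) > 0\}$. By the definitions \eqref{eq:guarsafe}--\eqref{eq:guarunsafe}, we have $\guarsafe = I_s(\feas)$ and $\guarunsafe = I_{\neg s}(\feas)$, so the conclusion is equivalent to $I_s(\feas) \subseteq I_s(\{\theta^*\})$ and $I_{\neg s}(\feas) \subseteq I_{\neg s}(\{\theta^*\})$.

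The key step is therefore to argue $\theta^* \in \feas$. By hypothesis, each demonstration $\demj$ is locally optimal for Problem~\ref{prob:fwd_prob} instantiated with the true parameter $\theta^*$ (and the assumed-known cost parameter $\thetac$). Local optimality implies the KKT conditions \eqref{eq:kkt_primal1}--\eqref{eq:kkt_stat} are satisfied at $\traj_j^\textrm{loc}$ for $\theta = \theta^*$ together with some Lagrange multipliers $\boldsymbol{\lambda}_k^j, \boldsymbol{\lambda}_{\neg k}^j, \boldsymbol{\nu}_k^j$. This exhibits a feasible point of Problem~\ref{prob:kkt_opt} whose $\theta$-component is $\theta^*$, which by definition of $\feas$ (the projection of the feasible set onto $\Theta$) gives $\theta^* \in \feas$.

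Having established $\{\theta^*\} \subseteq \feas$, Lemma~\ref{lem:contain} applied with $\mathcal{B} = \{\theta^*\}$ and $\hat{\mathcal{B}} = \feas$ immediately yields $I_{\neg s}(\feas) \subseteq I_{\neg s}(\{\theta^*\}) = \unsafeset$ and $I_s(\feas) \subseteq I_s(\{\theta^*\}) = \safeset$, which is exactly $\guarunsafe \subseteq \unsafeset$ and $\guarsafe \subseteq \safeset$.

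The main conceptual obstacle is the step $\theta^* \in \feas$; everything else is bookkeeping with definitions. This step rests entirely on the KKT conditions being \emph{necessary} for local optimality of a constrained problem whose constraint qualification holds on the demonstrations, so that the multipliers are guaranteed to exist. If one wishes to be fully rigorous, I would briefly note that Problem~\ref{prob:kkt_opt} drops the primal feasibility constraints \eqref{eq:kkt_primal1}--\eqref{eq:kkt_primal2} only because they hold identically on $\demj$ (as the demonstration satisfies all known constraints by assumption), so the witness $(\theta^*, \boldsymbol{\lambda}_k^j, \boldsymbol{\lambda}_{\neg k}^j, \boldsymbol{\nu}_k^j)$ is indeed feasible for the problem as stated. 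No approximate-optimality slack is needed here since the statement is for the exact (Problem~\ref{prob:kkt_opt}) formulation.
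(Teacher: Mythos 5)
Your proposal is correct and follows essentially the same argument as the paper: the crux in both is that local optimality of the demonstrations implies the KKT conditions hold with the true parameter $\theta^*$ and some multipliers, so $\theta^* \in \feas$, and then $\guarsafe$, $\guarunsafe$ (intersections over all $\theta \in \feas$) are contained in the sets defined by $\theta^*$ alone. The only cosmetic difference is that you phrase the final containment via Lemma~\ref{lem:contain} with $\mathcal{B} = \{\theta^*\}$, whereas the paper argues it directly by contradiction.
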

\begin{proof}
	We first prove that $\guarunsafe\subseteq \unsafeset$. Suppose that there exists $\state \in \guarunsafe$ such that $\state \notin \unsafeset$. Then by definition, for all $\theta \in \feas$, $g(x, \theta) > 0$.  However, we know that all locally-optimal demonstrations satisfy the KKT conditions with respect to the true parameter $\theta^*$; hence, $\theta^* \in \feas$. Then, $\state \in \unsafeset(\theta^*)$. Contradiction.	Similar logic holds for proving that $\guarsafe \subseteq \safeset$. Suppose that there exists $\state \in \guarsafe$ such that $\state \notin \safeset$. Then by definition, for all $\theta \in \feas$, $g(x, \theta) \le 0$.  However, we know that all locally-optimal demonstrations satisfy the KKT conditions with respect to the true parameter $\theta^*$; hence, $\theta^* \in \feas$. Then, $\state \in \safeset(\theta^*)$. Contradiction.
\end{proof}
\begin{rem}
Unfortunately, it is difficult to guarantee conservativeness when using suboptimal demonstrations (solving Problem 	\ref{prob:kkt_subopt}), as the relationship between cost suboptimality and KKT violation is generally unknown. However, we note in practice that the $\guarsafe, \guarunsafe$ recovered using suboptimal demonstrations still tend to be conservative (see Sec. \ref{sec:res_arm}).
\end{rem}

\begin{theorem}[Conservativeness of Problem \ref{prob:kkt_relax}]\label{thm:kkt_relax}
	Suppose the constraint parameterization $g(\state, \theta)$ is known exactly. Then, extracting $\guarsafe$ and $\guarunsafe$ (as defined in \eqref{eq:guarsafe} and \eqref{eq:guarunsafe}, respectively) from the feasible set of Problem \ref{prob:kkt_relax} (denoted as $\feas$) returns $\guarunsafe \subseteq \unsafeset$ and $\guarsafe \subseteq \safeset$.
\end{theorem}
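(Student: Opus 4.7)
The plan is to mirror the proof of Theorem \ref{thm:conserv_kkt_opt} verbatim. That proof uses only one nontrivial fact --- that the true constraint parameter $\theta^*$ belongs to the projected feasible set $\feas$ --- and then derives a contradiction from the definitions \eqref{eq:guarsafe} and \eqref{eq:guarunsafe}. So the entire theorem reduces to showing that $\theta^*$ remains in $\feas$ even after the KKT feasibility program is replaced by its McCormick-linearized relaxation in Problem \ref{prob:kkt_relax}.

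To establish $\theta^* \in \feas$, I would lift an exact KKT witness into a feasible point of Problem \ref{prob:kkt_relax}. Starting from the true multipliers $(\boldsymbol{\lambda}_k^{j*}, \lambda_{1,\neg k}^{j,\cstate *}, \boldsymbol{\nu}_k^{j*})$ at $\theta^*$, set $\ell_1^{j,\cstate} \doteq \lambda_{1,\neg k}^{j,\cstate *}\,\nabla_\cstate g_{1,\neg k}(\cstate,\theta^*)$, $z_{1,1}^{j,\cstate} = 1$ exactly when $\lambda_{1,\neg k}^{j,\cstate *} > 0$, $z_{1,2}^{j,\cstate} = 1 - z_{1,1}^{j,\cstate}$, and $r_1^{j,\cstate} = \ell_1^{j,\cstate}\,z_{1,1}^{j,\cstate}$. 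I would then verify each block of Problem \ref{prob:kkt_relax} in turn: primal feasibility and Lagrange non-negativity are inherited from the exact KKT solution; the big-M complementary slackness block holds because exact complementary slackness forces at most one of $\lambda_{1,\neg k}^{j,\cstate *}$ and $-g_{1,\neg k}(\cstate,\theta^*)$ to be nonzero, which matches the indicator assignment; the McCormick block \eqref{eq:bilinear_slack} is tight at $r = \ell z$ by construction; and the stationarity equation collapses onto the true stationarity condition because $r_1^{j,\cstate}$ exactly reproduces $\lambda_{1,\neg k}^{j,\cstate *}\,\nabla_\cstate g_{1,\neg k}(\cstate,\theta^*)$ (with both sides zero in the inactive case). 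The general $N_c > 1$ setting follows the same template, with the extra binaries $\mathbf{q}$ arising from the union encoding also set to their true active-set indicators and the resulting binary-binary products linearized exactly.

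With $\theta^* \in \feas$, the conservativeness claim follows immediately: for any $\cstate \in \guarunsafe = \bigcap_{\theta \in \feas}\{\cstate : g(\cstate,\theta) > 0\}$ one has $g(\cstate,\theta^*) > 0$, hence $\cstate \in \unsafeset$, and symmetrically for $\guarsafe \subseteq \safeset$. Equivalently, because Problem \ref{prob:kkt_relax} is a genuine relaxation of Problem \ref{prob:kkt_opt}, its projection onto $\Theta$ contains the projection of Problem \ref{prob:kkt_opt}, so Lemma \ref{lem:contain} combined with Theorem \ref{thm:conserv_kkt_opt} gives the inclusion. The main obstacle I foresee is ensuring that the lifted $\ell_1^{j,\cstate}$ actually falls inside the a-priori bounds $[\underline M, \overline M]$ used in the McCormick block; I would treat this as the standard validity assumption for the McCormick envelope --- since $\Theta$ is a union of polytopes and the product $\lambda\,\nabla_\cstate g_{1,\neg k}$ is bounded on this set under customary multiplier bounds, one may pick $\underline M, \overline M$ conservatively enough a priori to cover every admissible bilinear value.
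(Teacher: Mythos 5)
Your proposal is correct and takes essentially the same route as the paper: the paper observes that the $\Theta$-projected feasible set of the unrelaxed bilinear problem, which contains $\theta^*$, is contained in the projected feasible set $\feas$ of Problem \ref{prob:kkt_relax} (since linearizing the bilinear terms only enlarges the feasible set), and then applies Lemma \ref{lem:contain} together with the argument of Theorem \ref{thm:conserv_kkt_opt}. Your explicit lifting of an exact KKT certificate (choosing $\ell_1^{j,\cstate} = \lambda_{1,\neg k}^{j,\cstate *}\nabla_\cstate g_{1,\neg k}(\cstate,\theta^*)$, the binaries from the true active set, and $r = \ell z$) just verifies in detail the one-line containment claim the paper asserts, and your caveat about the McCormick bounds corresponds to the paper's remark that $M, \underline M, \overline M$ must be chosen sufficiently large.
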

\begin{proof}
	Denote the $\Theta$-projected feasible set of the original unrelaxed problem (i.e. variables $r_{i}$ are not introduced and the bilinear terms between $\theta$ and $\lambda$ remain) as $\feas_\textrm{MINLP}$ and the $\Theta$-projected feasible set of Problem \ref{prob:kkt_relax} as $\feas$. Using the logic in Theorem \ref{thm:conserv_kkt_opt}, extracting $\guarsafe$ and $\guarunsafe$ from $\feas_\textrm{MINLP}$ yields $\guarsafe \subseteq \safeset$ and $\guarunsafe \subseteq \unsafeset$ (since $\feas_\textrm{MINLP}$ is the true feasible set, like assumed in Theorem \ref{thm:conserv_kkt_opt}). Furthermore, $\feas_\textrm{MINLP} \subseteq \feas$, since relaxing the bilinear terms to linear terms in Problem \ref{prob:kkt_relax} expands the feasible set compared to the unrelaxed problem. By definition, $\guarsafe = I_s(\feas_\textrm{MINLP})$ and $\guarunsafe = I_{\neg s}(\feas_\textrm{MINLP})$, and via Lemma \ref{lem:contain}, $I_s(\feas) \subseteq I_s(\feas_\textrm{MINLP})$ and $I_{\neg s}(\feas) \subseteq I_{\neg s}(\feas_\textrm{MINLP})$. Hence, $I_s(\feas) \subseteq \safeset$ and $I_{\neg s}(\feas) \subseteq \unsafeset$.
\end{proof}

\begin{rem}
	For brevity, we omit conditions on $M, \underline M, \overline M$ for conservativeness; it is well-known that this is achieved by choosing the big-M constants to be sufficiently large \cite{bertsimas}.
\end{rem}

\vspace{-4pt}
\subsection{Global vs local learnability}\label{sec:learnability}
\vspace{-0pt}

\begin{definition}[Local learnability]
	A state $\state \in \unsafeset$ is locally learnable if there exists \textbf{any} set of $\numsafe$ locally-optimal demonstrations, where $\numsafe$ may be infinite, such that $\state \in \mathcal{I}_{\neg s}(\feas)$, where $\feas$ is the $\Theta$-projected feasible set of Problem \ref{prob:kkt_opt}. We also define the \textit{locally learnable} set of unsafe states $\guarunsafe^{\textrm{loc},*}$ as the union of all locally learnable states.
\end{definition}

\begin{definition}[Global learnability]
		A state $\state \in \unsafeset$ is globally learnable if there exists \textbf{any} set of $\numsafe$ globally-optimal demonstrations and $\numunsafe$ sampled strictly lower-cost (and hence unsafe) trajectories, where $\numsafe$ and $\numunsafe$ may be infinite, such that $\state \in \mathcal{I}(\feas_\textrm{glo})$, where $\mathcal{F}_\textrm{glo}$ is the feasible set of Problem 2 in \cite{corl} (which recovers a constraint consistent with the demonstrations and sampled unsafe trajectories). Accordingly, we define the \textit{globally learnable} set of unsafe states $\guarunsafe^{\textrm{glo},*}$ as the union of all globally learnable states.
\end{definition}

Note that a safe state $\state_s \in \safeset$ can always be learned guaranteed safe, as there always exists a safe globally-optimal or locally-optimal demonstration passing through $\state_s$. Armed with these definitions, we show the following:

\begin{theorem}[Global vs local]\label{thm:global_vs_local}
	Suppose the initial constraint parameter set $\Theta$ is identical for both the local and global problems. Then, $\guarunsafe^{\textrm{loc},*} \subseteq \guarunsafe^{\textrm{glo},*}$.
\end{theorem}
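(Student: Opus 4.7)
The plan is to show that every $\state \in \guarunsafe^{\textrm{loc},*}$ admits a witness under the global definition: given any local-learnability witness $D^\textrm{loc}$ for $\state$ (so that $\state \in I_{\neg s}(\feas^\textrm{loc}(D^\textrm{loc}))$), I would construct a set of globally-optimal demonstrations $D^\textrm{glo}$ and lower-cost unsafe samples $S$ for which $\feas^\textrm{glo}(D^\textrm{glo}, S) \subseteq \feas^\textrm{loc}(D^\textrm{loc})$. Lemma \ref{lem:contain} would then immediately give $I_{\neg s}(\feas^\textrm{loc}) \subseteq I_{\neg s}(\feas^\textrm{glo})$, and hence $\state \in I_{\neg s}(\feas^\textrm{glo}) \subseteq \guarunsafe^{\textrm{glo},*}$.

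For the construction, I would exploit the ``there exists \emph{any} set of demonstrations'' quantifier in both definitions. I would take $D^\textrm{glo} := D^\textrm{loc}$, where each $\demj$ is now associated with a task in which it is globally (not merely locally) optimal, e.g., by augmenting the original cost with a strong convex penalty on deviation from $\demj$ so that $\demj$ becomes the unique global minimum while the constraint structure is preserved. For $S$ I would take the (possibly infinite) set of all strictly lower-cost trajectories relative to each $\demj$, so that the constraints of Problem 2 of \cite{corl} (demos safe, every strictly lower-cost sample unsafe) exactly characterize the parameters $\theta$ under which each $\demj$ is globally optimal.

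The crux is then the containment $\feas^\textrm{glo}(D^\textrm{glo}, S) \subseteq \feas^\textrm{loc}(D^\textrm{loc})$. For any $\theta$ in the global feasible set, the Problem 2 conditions force each $\demj$ to be globally optimal under constraint parameter $\theta$. Global optimality implies local optimality, and the KKT conditions are necessary conditions for local optimality \cite{cvxbook}; hence there exist multipliers $\boldsymbol{\lambda}_{k}^j, \boldsymbol{\lambda}_{\neg k}^j, \boldsymbol{\nu}_{k}^j$ making $\textrm{KKT}(\demj)$ hold under $\theta$, placing $\theta$ in $\feas^\textrm{loc}(D^\textrm{loc})$. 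Applying Lemma \ref{lem:contain} concludes the proof.

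The main obstacle is reconciling the cost functions on the two sides: $\feas^\textrm{loc}(D^\textrm{loc})$ is stated relative to the task under which $\demj$ was originally locally optimal, while the chain ``global optimality $\Rightarrow$ local optimality $\Rightarrow$ KKT'' is applied under the surrogate task. The surrogate must be chosen so that its gradient agrees with the original cost gradient at $\demj$ (so the stationarity equation \eqref{eq:kkt_stat} used in $\feas^\textrm{loc}$ is the same one implied by local optimality under the surrogate) while bending the cost upward away from $\demj$ to force global optimality; verifying that such an augmentation exists within the class of admissible cost parameterizations is the technical work that needs to be made explicit.
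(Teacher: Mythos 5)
Your core mechanism --- global optimality implies local optimality implies the KKT conditions, the global problem imposes the additional requirement that lower-cost trajectories be infeasible, hence $\feas_\textrm{glo} \subseteq \feas$, and Lemma \ref{lem:contain} transfers this containment to the implied unsafe sets --- is exactly the engine of the paper's proof. Where you diverge is in the witness-transfer construction: you try to turn a local-learnability witness $D^\textrm{loc}$ (whose demonstrations may be only locally optimal) into a set of globally-optimal demonstrations by re-costing each task with a strong proximity penalty centered at $\demj$, chosen so that the gradient at $\demj$ matches that of the original cost, and then sampling all strictly lower-cost trajectories. The paper attempts no such construction; it simply compares, for demonstrations usable by the global method, the feasible set of Problem \ref{prob:kkt_opt} with that of Problem 2 of \cite{corl} in the limit of complete sampling, and applies Lemma \ref{lem:contain}.

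The genuine gap is the step you flag but do not close, and it is not a routine technicality: the surrogate task must be an admissible task of the problem class, i.e.\ its cost must be expressible as $c_\task(\cdot,\thetac)$ for some $\thetac \in \Thetac$, and for the baseline of \cite{corl} the notion of ``strictly lower-cost sample'' is defined relative to that task's (known) cost. For a fixed parametric family --- e.g.\ the weighted path-length costs used throughout the paper --- a cost of the form $c_\task(\cdot,\thetac) + \alpha\Vert \trajxu - \demj\Vert^2$ is generally not attainable for any admissible parameter, and no admissible $\thetac$ need exist under which a merely locally optimal $\demj$ becomes globally optimal; keeping the original task does not help either, since then $\demj$ is simply not a legal demonstration for the global problem. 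So the key containment $\feas^\textrm{glo}(D^\textrm{glo},S)\subseteq\feas^\textrm{loc}(D^\textrm{loc})$ is unestablished and can fail as constructed. To be fair, your more literal reading of the existential quantifiers exposes a subtlety the paper's one-line proof glosses over (locally-but-not-globally-optimal witnesses), but the paper's argument stays within the problem class by comparing the two feasible sets on a common demonstration set, whereas your fix imports machinery (cost modification) that the setting does not license; to complete your route you would have to prove the surrogate is admissible, or else show directly that witnesses that are only locally optimal add no learnable unsafe states beyond those obtainable from globally optimal demonstrations.
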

\begin{proof}
	Any globally-optimal demonstration must also satisfy the KKT conditions, as it is also locally-optimal. Further conditions (in the form of lower-cost trajectories being infeasible) must be imposed on a constraint parameter for it to be globally-optimal. Hence, $\feas_\textrm{glo} \subseteq \feas$. By Lemma \ref{lem:contain}, $\mathcal{I}(\feas) \subseteq \mathcal{I}(\feas_\textrm{glo})$, and thus $\guarunsafe^{\textrm{loc},*} \subseteq \guarunsafe^{\textrm{glo},*}$.
\end{proof}

Note that Theorem \ref{thm:global_vs_local} holds in the limit of having sampled \textbf{all} unsafe trajectories. In practice, the sampling is nowhere near complete, especially for nonlinear dynamics. We see in these cases (Sec. \ref{sec:res_quad}) that our KKT-based method learns more compared to sampling-based techniques. Finally, we note that cost function uncertainty can only decrease learnability, as it enlarges the feasible set of Problem \ref{prob:kkt_opt}.
\vspace{-2pt}
\section{Results}
\vspace{-1pt}
We show our method, first on 2D examples (Sec. \ref{sec:res_toy}) for intuition, and then on high-dimensional 7-DOF arm (Sec. \ref{sec:res_arm}) and quadrotor (Sec. \ref{sec:res_quad}) constraint-learning problems (see the accompanying video for experiment visualizations). All computation times are recorded on a laptop with a 3.1 GHz Intel Core i7 processor and 16 GB RAM.

\vspace{-4pt}
\subsection{2D examples}\label{sec:res_toy}
\vspace{-1pt}
\noindent\textit{\underline{Global vs. local}:} Assuming global demonstration optimality can enlarge $\guarsafe/\guarunsafe$ compared to assuming local optimality (Theorem \ref{thm:global_vs_local}). In this example, we show some common differences in the learned constraints when assuming global/local optimality. Consider a 2D kinematic system $\chi_{t+1} = \chi_t + u_t$, $\chi = [x, y]^\top$, $\Vert u_t \Vert \le 1$ avoiding the pink obstacle in Fig. \ref{fig:global_vs_local}. We use an axis-aligned box constraint parameterization. In Fig. \ref{fig:global_vs_local} (left), by assuming the demonstrations (cyan, green) are globally-optimal and sampling lower-cost trajectories (the middle state on each trajectory is plotted in red), the hatched area is implied guaranteed unsafe, as any axis-aligned box containing the sampled unsafe states (in red) must also contain the hatched area. In contrast, assuming local optimality gives us zero volume learned guaranteed safe/unsafe, as a measure-zero horizontal line obstacle (orange dashed) can make the demonstrations locally-optimal: as the line supports the middle state on each demonstration, the cost cannot be locally improved. In Fig. \ref{fig:global_vs_local}, center, we show a case where there is no gap in learnability: without assuming a parameterization, the demonstrations can be explained by two horizontal line obstacles, but together with the box parameterization, we recover $\guarsafe = \safeset$ and $\guarunsafe = \unsafeset$. Fig. \ref{fig:global_vs_local} (right) shows that assuming global optimality may result in non-conservative constraint recovery (e.g. if the dotted red line were a sampled unsafe trajectory), while a horizontal line obstacle (orange dashed line) can explain local optimality of the demonstration, yielding conservative constraint recovery.

\begin{figure}
\centering
\vspace{5pt}
\includegraphics[width=\linewidth]{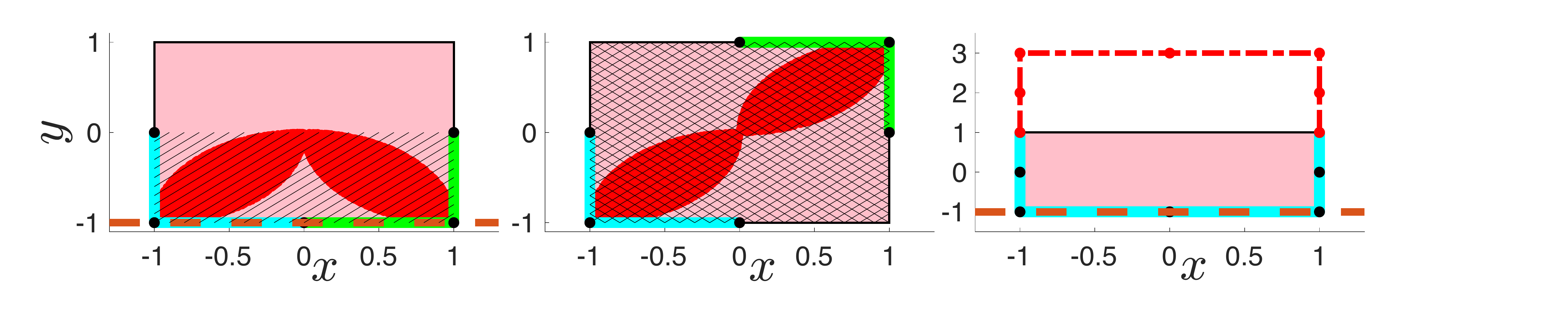}
\vspace{-20pt}
\caption{Left: local learns less than global. Center: local learns the same as global. Right: global recovers non-conservative solution. Red: sampled unsafe trajectories. Pink: true constraint. Green/cyan: demonstrations. \vspace{-8pt}}
\label{fig:global_vs_local}
\end{figure}

\noindent\textit{\underline{Effects of cost uncertainty}:}
We show that learnability under cost uncertainty is more related to the possible behaviors that a cost uncertainty set can represent, rather than the actual size of the cost parameter space. For the demonstrations/constraint in the center plot of Fig. \ref{fig:global_vs_local}, consider the following cost uncertainty sets: A) $c(\traj) = \sum_{t=1}^{T-1} \gamma_1(x_{t+1} - x_t)^2 + \gamma_2(y_{t+1} - y_t)^2$, where $\gamma_i \in [-5, 5], i = 1, 2$ and B) $c(\traj) = \sum_{k=1}^{10}\sum_{t=1}^{T-1} \big(\gamma_{1,k}(x_{t+1} - x_t)^{2k} + \gamma_{2,k}(y_{t+1} - y_t)^{2k}\big)$, where $\gamma_{i,k}\in[0.001, 5]$ for all $i, k$. While Set A has a much smaller parameter space compared to Set B (2 vs 20 parameters), allowing $\gamma_1, \gamma_2$ to take negative values enables the case where the demonstrator wants to maximize path length (i.e. set $\gamma_1 = \gamma_2 = -1$). For fixed start/goal states and control constraints, the observed demonstrations are actually locally-optimal with respect to a cost function which maximizes path length in an environment with no box state space constraint. Hence, for Set A, our method returns $\guarsafe = \guarunsafe = \emptyset$. In contrast, while Set B has a much larger parameter space, the range of allowable behaviors is small (all cost terms must penalize path length). Thus, despite the large cost parameter space, $\guarsafe = \safeset$ and $\guarunsafe = \unsafeset$.

\begin{figure}
\centering
\hspace{-10pt}\floatbox[{\capbeside\thisfloatsetup{capbesideposition={right,center},capbesidewidth=3.1cm}}]{figure}[\FBwidth]
{\hspace{-10pt}\caption{Nonlinear constraint. Blue: true constraint boundary. Red/green states: learned in $\guarunsafe/\guarsafe$. Purple/orange: two demonstrations.}\label{fig:nonlinear}}
{\includegraphics[width=5.5cm]{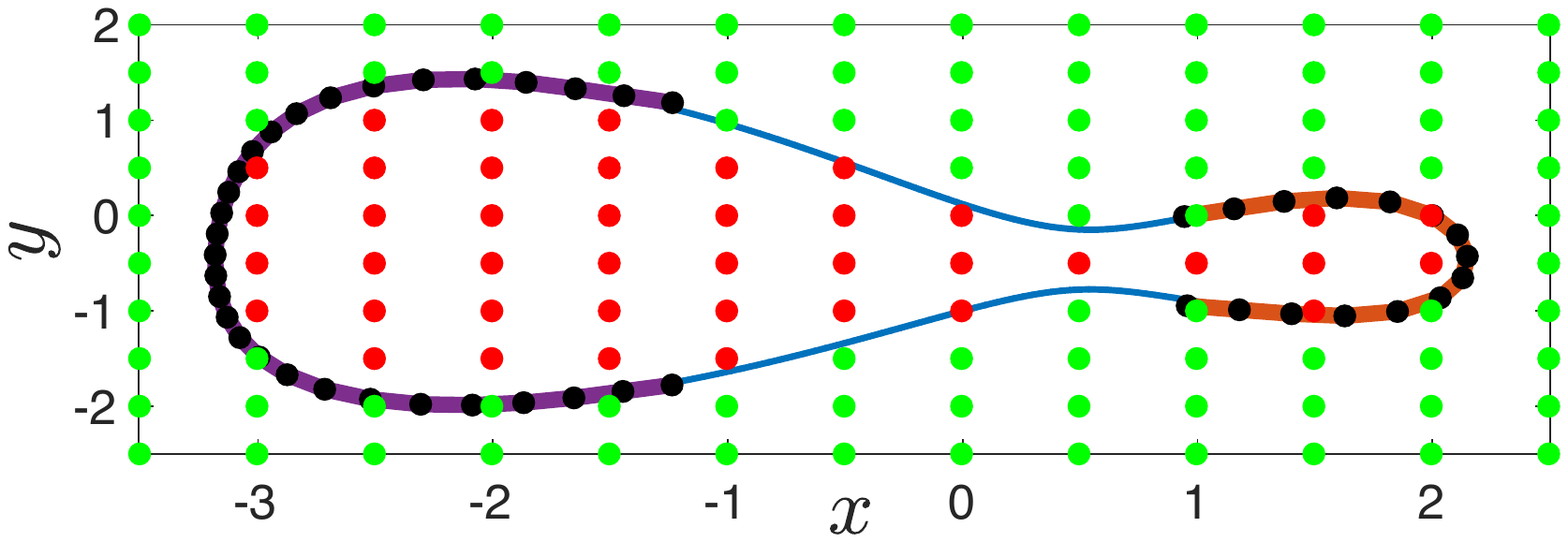}}
\vspace{-20pt}
\end{figure}

\noindent\textit{\underline{Nonlinear constraint}:}
We emphasize that while our method requires an affine parameterization in the \textit{constraint parameters}, constraints that are nonlinear in the \textit{state} can still be learned. Consider a parameterization $g_{1,\neg k}(\cstate, \theta) = 2(x^4 + y^4) - 5(x^3 + y^3) + 5(x - 1)^3 + 5(y + 1)^3- \theta$, which yields a highly nonlinear state space constraint. With two demonstrations for $\theta = 2$ (see Fig. \ref{fig:nonlinear}), $\guarsafe=\safeset$ and $\guarunsafe=\unsafeset$.

\vspace{-6pt}
\subsection{7-DOF arm}\label{sec:res_arm}
\vspace{6pt}

\vspace{-5pt}\begin{figure}
\centering
\vspace{-13pt}
\hspace{-5pt}\floatbox[{\capbeside\thisfloatsetup{capbesideposition={right,center},capbesidewidth=1.5cm}}]{figure}[\FBwidth]
{\hspace{-12pt}\caption{Left: demonstrations for bartender example. Right: novel trajectories planned with learned constraint.}\label{fig:arm_viz}}
{\includegraphics[width=7cm]{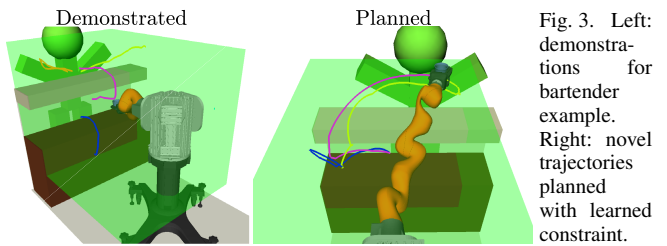}}
\end{figure}
\vspace{-3pt}

\begin{figure}
\centering
\vspace{5pt}
\includegraphics[width=\linewidth]{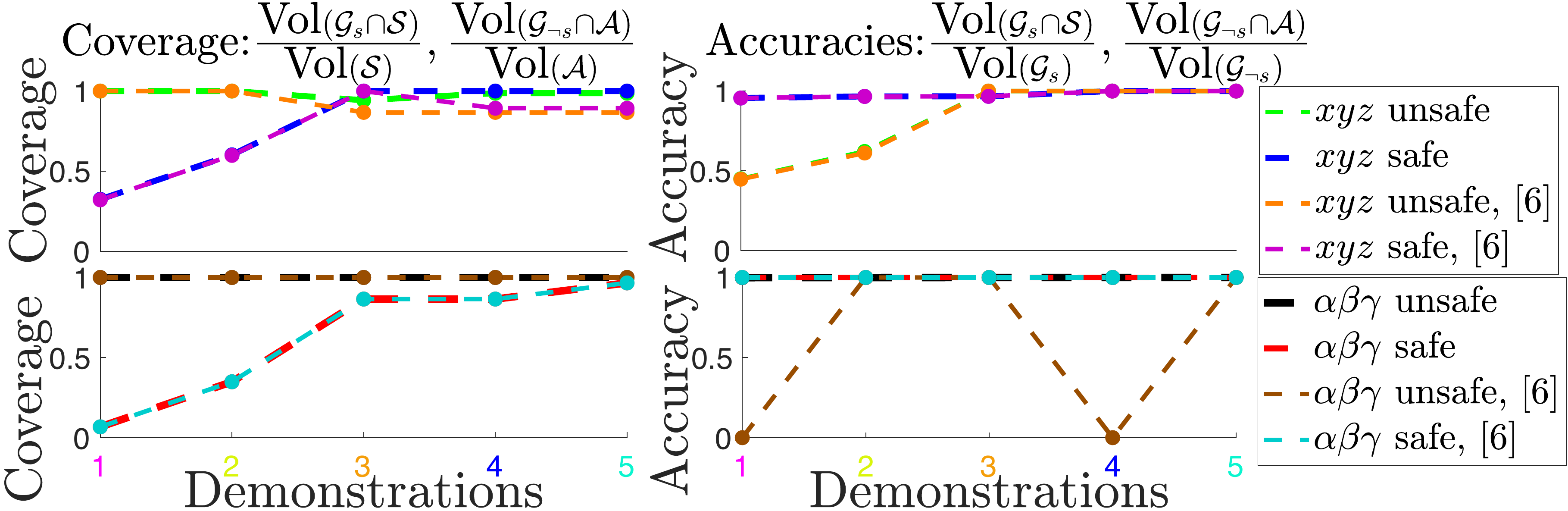}
\vspace{-16pt}
\caption{Arm bartender statistics; x-axis color-coded with demos in Fig. \ref{fig:arm_viz}.}
\label{fig:arm_stats}
\vspace{-5pt}
\end{figure}

\begin{figure}\vspace{-6pt}
\centering
\hspace{-5pt}\floatbox[{\capbeside\thisfloatsetup{capbesideposition={right,center},capbesidewidth=1.5cm}}]{figure}[\FBwidth]
{\hspace{-12pt}\caption{Left: arm demos for ellipse example. Right: novel trajectories planned using learned constraint.}\label{fig:arm_ellipse}}
{\includegraphics[width=6.6cm]{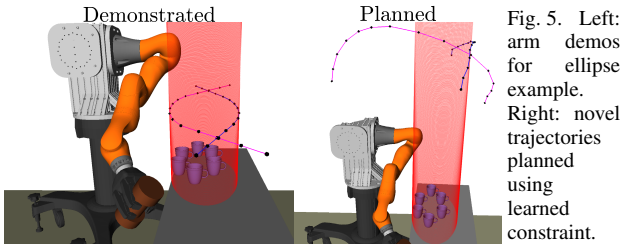}}
\vspace{-22pt}
\end{figure}

\begin{figure}
\centering
\vspace{-18pt}
\hspace{-5pt}\floatbox[{\capbeside\thisfloatsetup{capbesideposition={right,center},capbesidewidth=1.5cm}}]{figure}[\FBwidth]
{\hspace{-12pt}\caption{Left: quadrotor demonstrations. Right: novel planned trajectories.}\label{fig:quad_demos}}
{\includegraphics[width=6.7cm]{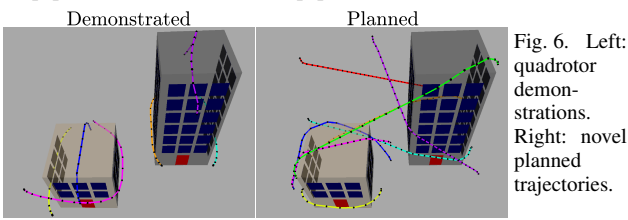}\vspace{0pt}}
\end{figure}

\begin{figure*}
\vspace{4pt}
\centering
\hspace{-0pt}\floatbox[{\capbeside\thisfloatsetup{capbesideposition={right,center},capbesidewidth=3.2cm}}]{figure}[\FBwidth]
{\hspace{-10pt}\caption{Quadrotor statistics: coverage and accuracy for $\guarsafe$, $\guarunsafe$. Demonstration axis is color coded with the demonstrations shown on the left in Fig. \ref{fig:quad_demos}.}\label{fig:quad_stats}}
{\includegraphics[width=14cm]{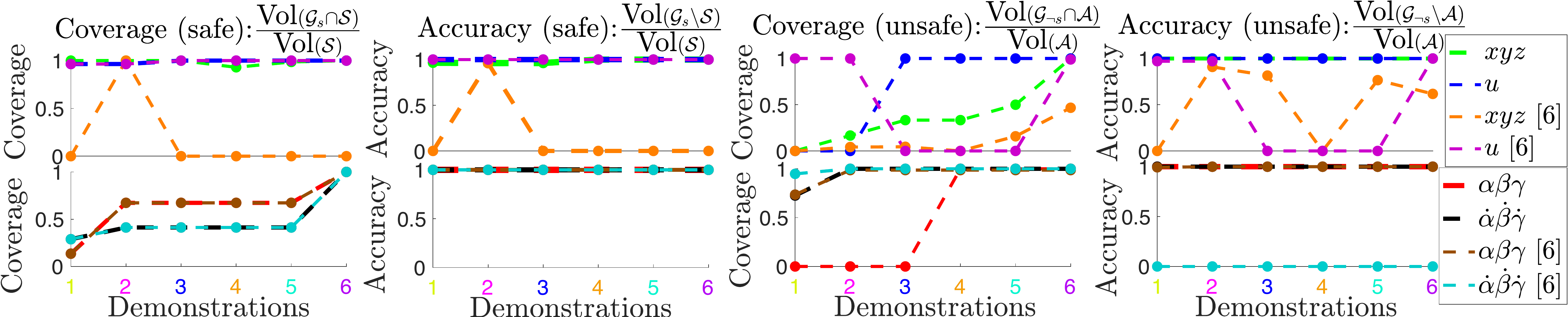}\vspace{-12pt}}
\end{figure*}

\noindent\textit{\underline{Robot bartender}:} Consider a 7-DOF Kuka iiwa robot bartender which must deliver a drink from the bar cabinet (Fig. \ref{fig:arm_viz}, brown box) or from another bartender to a customer at the counter (Fig. \ref{fig:arm_viz}, gray box). To do this, the arm must satisfy an end-effector pose constraint to avoid spilling the drink, and the swept volume of the arm must not collide with the bar furniture while satisfying proxemics constraints (Fig. \ref{fig:arm_viz}, green box) with respect to the customer. We use a kinematic model of the arm, $j_{t+1}^i = j_t^i + u_t^i, i = 1,...,7$, where $\Vert u_t \Vert_2^2 \le 0.8$ for all $t$. Five suboptimal human demonstrations, optimizing joint-space path length $c(\traj) = \sum_{i=1}^{T-1} \Vert j_{t+1} - j_t \Vert_2^2$, are captured in a virtual reality environment using an HTC Vive. The proxemics constraints encoded in the demonstrations (Fig. \ref{fig:arm_viz}, left) disallow the arm from getting too close to the customer and from making large sweeping motions from the left, right, and particularly the top, as the customer can perceive such motions as aggressive. We aim to learn these 15 constraint parameters: $[\underline x^\textrm{ctr}, \underline z^\textrm{ctr}, \overline z^\textrm{ctr}]$ (unknown extents of the bar top), $[\underline x^\textrm{cab}, \overline z^\textrm{cab}]$ (unknown extents of the bar cabinet), $[\underline \alpha, \overline \alpha, \underline \beta, \overline \beta, \underline \gamma, \overline \gamma]$ (unknown pose constraint), and $[\overline x^\textrm{prox}, \underline y^\textrm{prox}, \overline y^\textrm{prox}, \overline z^\textrm{prox}]$ (box proxemics constraint). 

The constraint parameters are recovered using the suboptimal analogue of Problem \ref{prob:kkt_union} (i.e. using the objective function of Problem \ref{prob:kkt_subopt}), taking 17.2 seconds to solve when using all demonstrations. For tractability, we approximate the swept volume constraint by sampling 18 points on the volume of the arm, mapping them through the arm's forward kinematics, and ensuring that the resulting points are consistent with the obstacle avoidance/proxemics constraint parameters. We solve Problems \ref{prob:query}/\ref{prob:kkt_vol} to extract the learned guaranteed safe/unsafe sets $\guarsafe$/$\guarunsafe$, where each query takes 16.4/12.1 seconds on average over 10 queries. Fig. \ref{fig:arm_stats} shows the coverage of $\guarsafe$/$\guarunsafe$ compared to the true safe/unsafe sets $\safeset$/$\unsafeset$, as well as the accuracy of the claimed safe/unsafe sets (Fig. \ref{fig:arm_stats}). We use the sampling-based approach described in \cite{corl} with Problem 3 of \cite{corl} as a baseline. We note that \cite{corl} will have difficulty with this example, since the swept volume constraint scales the number of decision variables in Problem 3 of \cite{corl} by a factor of 18; this limits the number of trajectory samples which can be tractably used in the constraint-recovery problem. Despite this, the pose constraint is learned fairly well by both the baseline and our method, though the baseline experiences accuracy dips due to suboptimality causing some safe lower-cost trajectories. However, the baseline performs poorly on the position constraints, as it does not learn that the bar top or the bar cabinet are unsafe and does not fully learn the safe set, due to insufficient trajectory samples. In contrast, our KKT-based approach recovers $\guarsafe = \safeset$ and $\guarunsafe = \unsafeset$. Finally, we extract volumes of guaranteed safe space using the procedure in Sec. \ref{sec:method_volume} and provide the extracted constraint to the CBiRRT planner \cite{tsr}, generating the novel safe trajectories in Fig. \ref{fig:arm_viz} (right). This experiment suggests that when Problem \ref{prob:fwd_prob} has many constraints (in this case, due to the swept volume), sampling trajectories leads to worse scalability and worse constraint-recovery performance compared to our KKT-based approach.

\noindent\textit{\underline{Elliptical end-effector constraint}:} This example is meant to demonstrate the efficacy of using Problem \ref{prob:query} in the planning loop. Suppose the arm is manipulating a heavy object near some glassware. For safety, the end effector's center of mass is constrained to lie outside an elliptical cylinder containing the glassware: $\chi_t^\top A(\theta) \chi_t - 2b(\theta)^\top \chi_t + c(\theta) > 0$, where $\chi_t = [x_t, y_t]^\top, A(\theta) = \textrm{diag}([0.5, 2]), b(\theta) = [0, 1.1]^\top$, and $c(\theta) = 0.505$. We modify Problem \ref{prob:query} to use the affine-relaxed KKT conditions, and solving this problem using two demonstrations (Fig. \ref{fig:arm_ellipse}, left) is enough to recover $\guarsafe = \safeset$, $\guarunsafe = \unsafeset$ via queries. To plan novel constraint-satisfying trajectories, we use STOMP \cite{stomp}, where the usual collision/constraint checker is replaced with Problem \ref{prob:query}. We show two planned trajectories (Fig. \ref{fig:arm_ellipse}, right), where the planning times were 2 and 6 minutes. Averaged over 10 different queries, solving Problem \ref{prob:query} takes 0.073 seconds. We note that this can be sped up by warm-starting Problem \ref{prob:query} with the results of previous queries (since like many trajectory optimizers, STOMP samples points near previous iterates).

\vspace{-6pt}
\subsection{Quadrotor}\label{sec:res_quad}
\vspace{-3pt}

We consider the scenario of a quadrotor carrying a delicate payload in an urban environment (see Fig. \ref{fig:quad_demos}). Accordingly, the quadrotor is constrained to not collide with surrounding buildings (i.e. $(x, y, z) \notin ([\underline x_1, \overline x_1]\times [\underline y_1, \overline y_1]\times [0, \overline z_1]) \vee ([\underline x_2, \overline x_2]\times[\underline y_2, \overline y_2]\times [0, \overline z_2])$), satisfy control constraints $\Vert u_t \Vert \le \overline U$, pose constraints $\alpha \in [\underline \alpha, \overline \alpha], \beta \in [\underline \beta, \overline \beta], \gamma \in [\underline \gamma, \overline \gamma]$, and angular velocity constraints $\dot \alpha\in [\underline{\dot\alpha}, \overline{\dot\alpha}], \dot \beta\in [\underline{\dot\beta}, \overline{\dot\beta}], \dot\gamma \in [\underline{\dot\gamma}, \overline{\dot\gamma}]$. In this problem, we aim to recover all of these constraints (23 unknown constraint parameters total) using the six demonstrations in Fig. \ref{fig:quad_demos} (left) by solving Problem \ref{prob:kkt_union}, which takes 19.4 seconds when using all demonstrations. We start from a single box parameterization for each constraint and detect from infeasibility that another box should be added to the position constraint parameterization (see Sec. \ref{sec:method_unknown}). The demonstrations are synthetically generated by solving trajectory optimization problems for the cost function $c(\traj) = \sum_{r \in R}\sum_{t=1}^{T-1} \gamma_r (r_{t+1} - r_t)^2$, where $R=\{x,y,z,\dot\alpha,\dot\beta,\dot\gamma\}$ and $\gamma_r = 1$. Our algorithm assumes parametric cost uncertainty of $\gamma_r \in [0.01, 3]$, and we assume the cost function is known exactly for the baseline \cite{corl}. This problem is especially challenging for the baseline, since having many unknown constraint parameters can lead to non-identifiability of the constraint from the sampled trajectories. Furthermore, the nonlinearity of the quadrotor dynamics (we use the dynamics in \cite{corl}) makes sampling difficult. We compute $\guarsafe$/$\guarunsafe$ via Problems \ref{prob:query}/\ref{prob:kkt_vol}, taking 26.6/43.0 seconds on average over 10 different queries. Fig. \ref{fig:quad_stats} compares the coverage and accuracy of $\guarsafe$ and $\guarunsafe$ between our approach and the baseline \cite{corl} for each of the constraint spaces (position, pose, velocity, and control). The baseline and our approach perform comparably for some of the ``simpler" convex constraints (e.g. the angular velocity/control safe sets). However, the baseline struggles to learn the unsafe sets (due to the simultaneous identification of so many constraints from poor trajectory samples) and position constraints (as the quadrotor has second order dynamics, it is difficult to sample combinations of trajectories which uniquely imply a single state is unsafe). We also note that the baseline accuracies fluctuate greatly due to imperfect trajectory sampling and the difficulty of distinguishing between multiple constraints: different data may cause the optimization to switch unsafeness assignments from one constraint to another active constraint). By extracting volumes of $\guarsafe$ using the method in Sec. \ref{sec:method_volume}, we pass $\guarsafe$ to a trajectory optimizer \cite{casadi} to generate novel safe trajectories (Fig. \ref{fig:quad_demos}, right). This experiment suggests that by avoiding trajectory sampling, our KKT-based approach performs better on high-dimensional nonlinear systems.

\vspace{-6pt}
\section{Discussion and Conclusion}
\vspace{-5pt}

We present an algorithm which uses the KKT optimality conditions to determine constraints that make observed demonstrations appear locally-optimal with respect to an uncertain cost function. As the KKT conditions are an \textit{implicit} condition on the set of constraints that can possibly explain the demonstrations, we sidestep the shortcomings of previous methods \cite{wafr, corl} which rely on sampling lower-cost trajectories as explicit certificates of unsafeness. In future work, we aim to address two shortcomings of our method: first, we require the dynamics to be known in closed form, while \cite{wafr, corl} just need a simulator; second, the number of decision variables in our method scales linearly with the number of demonstrations, making it important that the demonstrations are informative with respect to the unknown constraint. To address these issues, we plan to extend our method to handle uncertain dynamics and develop an active learning method to obtain informative demonstrations.

\addtolength{\textheight}{-12cm}   



%
\vspace{-5pt}
\bibliographystyle{IEEEtran}
\bibliography{refs}

\end{document}